\lstdefinelanguage{Julia}%
  {morekeywords={abstract,break,case,catch,const,continue,do,else,elseif,%
      end,export,false,for,function,immutable,import,importall,if,in,%
      macro,module,otherwise,quote,return,switch,true,try,type,typealias,%
      using,while},%
   sensitive=true,%
   alsoother={\$},%
   morecomment=[l]\#,%
   morecomment=[n]{\#=}{=\#},%
   morestring=[s]{"}{"},%
   morestring=[m]{'}{'},%
}[keywords,comments,strings]
\newenvironment{List}[1][]{%
  \begin{mylist}[#1]
  \begin{enumerate}[label=($\mathcal{R}$\arabic*)]
}{%
  \end{enumerate}
  \end{mylist}
}
\crefname{mylist}{list}{lists}
\Crefname{mylist}{List}{Lists}
\numberwithin{equation}{section}
\theoremstyle{plain}
\newtheorem{theorem}{Theorem}[section]
\newtheorem{corollary}[theorem]{Corollary}
\newtheorem{lemma}[theorem]{Lemma}
\newtheorem{proposition}[theorem]{Proposition}
\theoremstyle{definition}
\newtheorem{remark}[theorem]{Remark}
\newenvironment{example}
  {\pushQED{\qed}\examplex}
  {\popQED\endexamplex}
\newtheorem{definition}[theorem]{Definition}
\newenvironment{boxy}
  {\begin{mdframed}[skipabove=10pt,skipbelow=10pt]}
  {\end{mdframed}}
\newcommand{\cg}{\mathcal{G}}
\newcommand{\ch}{\mathcal{H}}
\DeclareMathOperator{\pa}{pa}
\DeclareMathOperator{\an}{an}
\DeclareMathOperator{\An}{An}
\DeclareMathOperator{\de}{de}
\DeclareMathOperator{\De}{De}
\DeclareMathOperator{\neighbors}{ne}
\DeclareMathOperator{\indeg}{indeg}
\DeclareMathOperator{\glob}{global}
\newcommand{\indep}{\perp\!\!\!\perp}
\DeclareMathOperator{\skel}{skel}
\newcommand{\CIperp}{\mathrel{\text{$\perp\mkern-10mu\perp$}}}
\newcommand{\CI}[1]{{%
  \setsepchar{{:}/{|}/{,}}
  \ignoreemptyitems
  \readlist*\mylist{#1}
  \ifthenelse{\listlen\mylist[] = 2}{\mylist[2]}{}%
  [\mylist[1,1,1] \CIperp \ifthenelse{\listlen\mylist[1,1] = 2}{\mylist[1,1,2]}{\mylist[1,1,1]}%
  \ifthenelse{\listlen\mylist[1] = 2}{{} \mid \mylist[1,2]}{}]%
}}
\newcommand{\dsep}{\perp_\mathrm{d}}
\newcommand{\dsepgiven}[3]{[#1 \dsep #2 | #3]}
\newcommand{\globaldsep}[1]{\glob_\mathrm{d}(#1)}
\newcommand{\equivd}{\sim_\mathrm{d}}
\newcommand{\equivast}{\sim_*}
\newcommand{\starsep}{\perp_*}
\newcommand{\starsepgiven}[3]{[#1 \starsep #2 | #3]}
\newcommand{\globalstarsep}[1]{\glob_\ast(#1)}
\newcommand{\equivstar}{\sim_\ast}
\newcommand{\Csep}{\perp_{C^\ast}}
\newcommand{\Csepgiven}[3]{[#1 \Csep #2 | #3]}
\newcommand{\globalCsep}[2]{\glob_{C^\ast}(#1, #2)}
\newcommand{\Csepvar}[1]{{%
  \setsepchar{{:}/{|}/{,}}
  \ignoreemptyitems
  \readlist*\mylist{#1}
  \ifthenelse{\listlen\mylist[] = 2}{\mylist[2]}{}%
  [\mylist[1,1,1] \Csep \ifthenelse{\listlen\mylist[1,1] = 2}{\mylist[1,1,2]}{\mylist[1,1,1]}%
  \ifthenelse{\listlen\mylist[1] = 2}{{} \mid \mylist[1,2]}{}]%
}}
\newcommand{\cgtr}{\mathcal{G}^{tr}_C}
\newcommand{\critdag}[1]{\mathcal{G}^{\ast}_{#1}}
\newcommand{\picrit}[3][\pi]{{#1}^{#2 #3}_{\text{crit}}}
\begin{document}

\title[A PC Algorithm for Max-Linear Bayesian Networks]{A PC Algorithm for Max-Linear Bayesian Networks}

\author{Carlos Améndola}
\address{Technische Universität Berlin}
\email{amendola@math.tu-berlin.de}

\author{Benjamin Hollering}
\address{Max Planck Institute for Mathematics in the Sciences}
\email{benjamin.hollering@mis.mpg.de}

\author{Francesco Nowell}
\address{Technische Universität Berlin}
\email{nowell@math.tu-berlin.de}

\date{}

\onehalfspace
\begin{abstract}
Max-linear Bayesian networks (MLBNs) are a relatively recent class of structural equation models which arise when the random variables involved have heavy-tailed distributions. Unlike most directed graphical models, MLBNs are typically not faithful to d-separation and thus classical causal discovery algorithms such as the PC algorithm or greedy equivalence search can not be used to accurately recover the true graph structure. In this paper, we begin the study of constraint-based discovery algorithms for MLBNs given an oracle for testing conditional independence in the true, unknown graph. We show that if the oracle is given by the $\ast$-separation criteria in the true graph, then the PC algorithm remains consistent despite the presence of additional CI statements implied by $\ast$-separation. We also introduce a new causal discovery algorithm named \texttt{PCstar} which assumes faithfulness to $C^\ast$-separation and is able to orient additional edges which cannot be oriented with only d- or $\ast$-separation. 

\end{abstract}
\maketitle

\section{Introduction}

Structural equation models on directed acyclic graphs are often used to model the causal relationships between the components of a random vector $X = (X_1, 
\ldots, X_n)$. In a structural equation model, the component $X_i$ of $X$ is typically a function of its parents plus independent noise of the form
\(
X_i = f_i(X_{\pa(i)}, \epsilon_i), 
\)
where $\pa(i)$ is the set of parents of $i$. 
Structural equation models, sometimes called structural causal models or Bayesian networks, have been studied extensively over the past 30 years, especially in the context of the emerging field of \emph{causality} \cite{Pearl_2009} and have been used in a variety of applications including biology  and economics \cite{Durbin_Eddy_Krogh_Mitchison_1998, EVANS1999135, handbook-of-graphical-models}. 
These models are particularly well understood in the case that the independent errors $\epsilon_i$ are Gaussian and the $f_i$ are linear \cite{Lauritzen_Graphical_Models, handbook-of-graphical-models}, though significant progress has recently been made when the former assumption is dropped as well \cite{tramontano2024parameteridentificationlinearnongaussian, wang2020high}.

In this paper, we focus on a relatively new variation of structural causal models, known as \emph{max-linear models} or max-linear Bayesian networks (MLBN) in which the random vector $X$ follows the structural equation system
\begin{align} \label{eqn:mlbn}
X_i =\bigvee_{j \in \pa(i)}c_{ji}X_j \vee Z_i, ~ \quad ~ c_{ij}, Z_i \geq 0,
\end{align}

where $\vee = \max$, the $c_{ij}$ are edge weights, and the $Z_i$ are independent, atom-free, continuous random variables. These models were first introduced in \cite{gissibl_max-linear_2018} to model scenarios with large risk, meaning that the variables $Z_i$ are typically assumed to be heavy-tailed and thus often take extreme values. Max-linear models have since become one of the core approaches for modeling causal relationships in extreme scenarios alongside the parametric setting described in \cite{engelke2025extremesstructuralcausalmodels}. Max-linear models excel at modeling scenarios which involve cascading failure in which one extreme event causes other extreme events throughout the network as often seen in applications such as financial risk \cite{einmahl2018continuous} or ecology \cite{asadi2015extremes}. 

One fundamental problem concerning any structural causal model is that of \emph{causal discovery} in which the goal is to reconstruct a directed graph whose edges encode causal relationships between random variables using data. This problem has been studied extensively for most classical families of directed graphical models and has led to the development of many efficient causal discovery algorithms \cite{discovery_review}. These causal discovery algorithms generally fit into two paradigms known as \emph{score-based} algorithms and \emph{constraint-based} algorithms. Score-based algorithms such as Greedy Equivalence Search (GES) \cite{chickering2003optimal} attempt to optimize a score-function while moving between graphs using a restricted set of edge additions, deletions, and reversals. On the other hand, constraint-based algorithms such as the PC algorithm \cite{spirtes_algorithm_1991} attempt to remove and orient edges such that the resulting graph satisfies exactly the conditional independence statements which are present in the observed data. In almost all existing causal discovery algorithms, \emph{faithfulness} to d-separation is required for consistency. Unlike most directed graphical models, MLBNs are not generically faithful to the well known d-separation criteria \cite{geiger_d-separation_2013}, meaning that a full dimensional subset of distributions in the model satisfy additional conditional independence (CI) statements which are not implied by d-separation on the underlying graph. However, \cite{amendola_conditional_2022} introduced the stronger criteria of $\ast$-separation and $C^\ast$-separation, which capture the additional CI statements which MLBNs satisfy and showed that MLBNs are generically faithful to the latter. 

In this paper, we investigate constraint-based causal discovery algorithms for MLBNs. This means we assume that there exists a true underlying DAG $\cg$ with weight matrix $C$ and assume we have an oracle for testing conditional independence in the observed data. We first show that if the input to the PC algorithm is the global Markov property of $\ast$-separation on $\cg$, that is the oracle produces the set of CI statements $\globalstarsep{\cg} = \bigl\{ \CI{i,j|K} \ \ \  \text{s.t} \ \  \starsepgiven{i}{j}{K} \ \text{holds in} \ \cg \bigr\}$, then the output will be identical to the result of running the PC algorithm with input $\glob_d(\cg)$. That is, despite the fact that $\ast$-separation yields additional CI statements, and thus MLBNs are not faithful to d-separation, the PC algorithm remains consistent and still produces the same result. We then turn our attention to constraint-based causal discovery with an oracle for checking $C^\ast$-separation in the pair $(\cg, C)$ since this is the criteria to which MLBNs are actually faithful, and thus if the oracle were to be replaced with an actual conditional independence test, this would be the true input. We introduce a new algorithm named \texttt{PCstar} which takes as input an oracle for testing $C^\ast$-separation in the true unknown pair $(\cg, C)$ and returns a CPDAG, $\mathcal{H}$ ,which faithfully encodes all conditional independencies yielded by the oracle. In particular, there exists an orientation $\mathcal{H'}$ of $\mathcal{H}$ and a positive dimensional set of coefficient matrices on $\mathcal{H}'$ such that $(\mathcal{H}', C_{\mathcal{H}'})$ and $(\cg, C_\cg)$ yield the same set of statements under $C^\ast$-separation. Our algorithm is directly inspired by the PC algorithm and proceeds in three steps, which are \emph{skeleton retrieval}, \emph{collider detection}, and \emph{cycle orientation}. Our first two steps are identical to that of the PC algorithm, albeit with the assumption of faithfulness to $C^\ast$-separation instead of d-separation. We then show how the additional statements which come from $C^\ast$-separation can be used to orient additional edges which lie in certain cycles. Lastly, we show that similarly to the PC algorithm, there exists a variant of our algorithm which is polynomial if the maximum in-degree of any node in the true graph is bounded. We end by showcasing how our algorithm performs on sparse random graphs. 

The remainder of this paper is organized as follows. In \Cref{sec:prelim} we provide background on max-linear models, separation criteria, and the PC algorithm. In \Cref{unweightedPC}, we show that the PC algorithm returns the same output when given an oracle for $\ast$-separation as it does for d-separation oracle despite the fact that the $\ast$-separation oracle applied to the unknown true graph yields additional CI statements. In \Cref{sec:pcstar}, we introduce our new causal discovery algorithm for max-linear models, named \texttt{PCstar} and show that the output faithfully encodes all CI statements produced by the oracle.  We then show that when the maximum in-degree of the true DAG $\cg$, is bounded, \texttt{PCstar} is polynomial in the number of nodes $n$ and that the extra statements which come from $C^\ast$-separation can often be used to orient additional edges which cannot be oriented under the assumption of faithfulness to d-separation. In \Cref{sec:simulations}, we show how \texttt{PCstar} performs on sparse random graphs and discuss some directions for future work.

\section{Preliminaries}
\label{sec:prelim}

\subsection{Graph terminology}
A \emph{simple directed graph} $\cg=(V,E)$ consists of nodes $V$ and directed edges $E$, i.e., edges of the form $i\rightarrow j$ such that there is at most one edge between any two nodes. We say that two nodes $i$ and $j$ are adjacent if they are connected by an edge. The set of all nodes adjacent to $i$ are the \emph{neighbors} of $i$, denoted $\neighbors(i)$.
An $i-j$ \emph{path} $\pi$ is a sequence of distinct nodes $(i, \dots, j)$ such that all consecutive nodes in $\pi$ are adjacent. We say $\pi = (k_0 =i, \dots , k_s = j)$ is \emph{directed} if all edges on $\pi$ are of the form $k
_i \rightarrow k_{i+1}$, i.e, point towards $j$. An $i-j$ path $\pi$ is called a \emph{trek} if there exists a $k\in \pi$ such that the subpaths $(i, \dots k)$ and $(k, \dots, j)$ are directed paths from $k$ to $i$ and $k$ to $j$ respectively. 
A directed path from $i$ to $j$ and the edge $j \rightarrow i$ form a \emph{directed cycle}. We call a simple directed graph without directed cycles a \emph{directed acyclic graph} (DAG). Given an edge $i\rightarrow j$ we say that $i$ is a \emph{parent} of $j$ and denote the set of parents of $j$ with $\pa(j)$. The cardinality of $\pa(j)$ is the \emph{in-degree} of $j$, denoted $\indeg(j)$. Given a directed path $i \rightarrow \dots \rightarrow j$ we say that $i$ is an \emph{ancestor} of $j$, respectively $j$ is a \emph{descendant} of $i$. We denote the set of ancestors with $\an(j)$ and the set of descendants with $\de(i)$. For a subset of nodes $K \subset V$ we define $\an(K) := \cup_{k\in K} \an(k) \setminus K$ and $\An(K) := \an(K) \cup K $. The sets $\de(K)$ and $\De(K)$ are defined analogously.  \\
The \emph{undirected skeleton} of a directed graph $\cg = (V,E)$ is the undirected graph with node set $V$ containing the edge $i-j$ if and only if $i$ and $j$ are adjacent in $\cg$, and is denoted $\skel(\cg)$. The notions of path, neighbors and cycles in undirected graphs are defined analogously to the directed case. 

\subsection{Graphs and separation conditions} 
Graphical models identify conditional independence relations through a separation condition $\perp$. The best-known instance of this in directed acyclic graphical models is \emph{d-separation} for linear structural equation models. A triple of nodes $\{i,k,j\} \subset V$ form a \emph{ collider} in a DAG $\cg = (V,E)$ if $i \rightarrow k \in E$ and $j \rightarrow k \in E$. The collider is said to be \emph{unshielded} if $i \rightarrow j \notin E$ and $j \rightarrow i \notin E$. An $i-j$ path $\pi$ in a DAG is \emph{d-connecting} given $K \subset V\setminus{ij}$ if all colliders along $\pi$ lie in $\An(K)$ and no non-collider along $\pi$ lies in $K$. 
If a d-connecting path exists, we say that $i$ and $j$ are \emph{d-connected} given $K$. Otherwise, we say that they are \emph{d-separated} and write $\dsepgiven{i}{j}{K}$. \\
 It is well-known \cite[Theorem 2]{geiger_d-separation_2013} that linear structural equation models on a DAG $\cg$ are \textit{faithful} to d-separation in the sense that for all $i,j, K \subset V$:
\begin{align}
    \CI{X_i, X_j | X_K} \ \ \text{holds in $X$} \ \ \iff \ \  \dsepgiven{i}{j}{K} \ \ \text{holds in $\cg$}.
\end{align}
Equivalently, the complete set of conditional independence statements in the model can be recovered from its \textit{Global Markov property}, defined as
\begin{align} \label{def:globalmarkov}
        \glob_{d}(\cg) = \bigl\{ \CI{i,j|K} \ \ \  \text{s.t} \ \  \dsepgiven{i}{j}{K} \ \text{holds in} \ \cg \bigr\}. 
\end{align}
The faithfulness property is what allows the reconstruction of all of the probabilistic CI statements in a distribution from the purely combinatorial properties of $\cg$ and vice versa. 

Two graphs $\cg$ and $\mathcal{H}$ are said to be \textit{Markov equivalent} with respect to a d-separation if $\glob_{d}(\cg) = \glob_d(\mathcal{H})$. We write this as $\cg \sim_d \mathcal{H}$. This relation partitions the set of DAGs on $n$ nodes in the following way: 
\begin{theorem}\cite[Theorem 2.1]{andersson_characterization_1997} \label{thm:dsepmarkoveq}
    Two DAGs $\cg$ and $\ch$ with equal node set are d-Markov equivalent if and only if they have the same undirected skeleton and unshielded colliders.
\end{theorem}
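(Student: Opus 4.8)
This is the classical Verma--Pearl/Frydenberg characterization, and the plan is to prove both implications directly from the definition of d-separation. For the forward direction, assume $\glob_d(\cg) = \glob_d(\ch)$. First I would show $\skel(\cg) = \skel(\ch)$ by arguing that adjacency is expressible purely in terms of $\glob_d$: if $i$ and $j$ are adjacent then the single edge between them is a d-connecting path given every $K \subseteq V \setminus \{i,j\}$, so $\dsepgiven{i}{j}{K}$ fails for all $K$; conversely, if $i$ and $j$ are non-adjacent then, since $\cg$ is acyclic, one of them --- say $j$ --- is a non-descendant of $i$, and the directed local Markov property gives $\dsepgiven{i}{j}{\pa_\cg(i)}$. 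Hence $\{i,j\}$ is an edge of $\cg$ iff no $K$ d-separates $i$ and $j$, a condition depending only on $\glob_d$, so the skeletons agree. Next I would show the unshielded colliders agree. Let $i \to k \leftarrow j$ be an unshielded collider of $\cg$; by the previous step $i,j$ are non-adjacent in $\ch$ and $k$ is adjacent to both. For every $K$ with $k \in K$ the length-two path $i \to k \leftarrow j$ is d-connecting in $\cg$ (its unique interior node $k$ is a collider lying in $\An_\cg(K)$), so $\dsepgiven{i}{j}{K}$ fails in $\cg$, hence in $\ch$. If $k$ were \emph{not} a collider on the path $i - k - j$ in $\ch$, then at least one of its two edges points out of $k$; a short case analysis --- in each configuration the directed local Markov property applied in $\ch$ produces a separating set of the form $\pa_\ch(i)$ or $\pa_\ch(j)$ that contains $k$, acyclicity ruling out the remaining configuration --- would give $\dsepgiven{i}{j}{K}$ in $\ch$ for some $K \ni k$, a contradiction. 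Thus $i \to k \leftarrow j$ is an unshielded collider of $\ch$, and by symmetry the two collections coincide.

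For the converse, assume $\cg$ and $\ch$ share a skeleton and a set of unshielded colliders; by symmetry it suffices to prove that $\dsepgiven{i}{j}{K}$ in $\ch$ implies $\dsepgiven{i}{j}{K}$ in $\cg$, equivalently (contrapositive) that any d-connecting $i$--$j$ path given $K$ in $\ch$ can be traded for one in $\cg$. The plan is to pick such a path $\pi$ in $\ch$ that is \emph{minimal} (no proper subsequence of its vertex list is again a d-connecting $i$--$j$ path given $K$) and show that the same walk, read in the common skeleton and possibly after local rerouting through chords, yields a d-connecting path in $\cg$. For a consecutive triple $(a,b,c)$ on $\pi$ that is unshielded, $b$ is a collider on $\pi$ in $\ch$ iff $\langle a,b,c\rangle$ is an unshielded collider of $\ch$ iff, by hypothesis, of $\cg$ iff $b$ is a collider on $\pi$ in $\cg$; so the collider/non-collider pattern at unshielded triples --- and in particular the requirement ``no interior non-collider lies in $K$'' --- transfers verbatim. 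For a shielded triple $(a,b,c)$ one argues from minimality that it cannot obstruct the transfer, since an interior non-collider there would let one reroute through the chord $a - c$ to a strictly shorter d-connecting path. It then remains to certify that each collider on $\pi$ lying in $\An_\ch(K)$ also lies in $\An_\cg(K)$, which one does by following its directed certificate of ancestry and rerouting that certificate through chords exactly as for $\pi$, noting that its consecutive triples are chains and so, when unshielded, keep their orientation in $\cg$ by the v-structure hypothesis.

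The routine ingredients are the directed local Markov property and the three-configuration check in the forward direction. \textbf{The main obstacle is the converse:} ancestry is genuinely not preserved under Markov equivalence (e.g.\ $a \to b \to c$ versus $a \leftarrow b \to c$ share a skeleton and have no colliders but different ancestor sets), so the condition ``a collider on $\pi$ lies in $\An(K)$'' cannot be transferred node by node; it must be propagated together with the surgery on $\pi$. The delicate part is therefore the rerouting bookkeeping: one tracks $\pi$ together with the ancestry certificates of its colliders, repeatedly bypasses shielded triples, and verifies that each bypass preserves d-connectivity. I expect the cleanest route is to organize this as a single induction on the total length of the connecting structure, reducing to the case where every interior collider and every interior non-collider of $\pi$ sits in an unshielded triple, at which point the equality of skeletons and of unshielded colliders applies triple by triple and closes the argument.
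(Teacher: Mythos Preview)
The paper does not prove this theorem; it is stated with a citation to Andersson--Madigan--Perlman and used purely as background, so there is no proof in the paper to compare your argument against.

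Your sketch follows the classical Verma--Pearl route and is sound in outline. The forward direction is essentially complete: the adjacency criterion is correct, and your case analysis for transferring unshielded colliders works (in each of the three non-collider configurations of $i-k-j$ in $\ch$, the node $k$ lies in $\pa_{\ch}(i)$ or $\pa_{\ch}(j)$, and acyclicity guarantees that the relevant parent set separates). One small slip: your stated contrapositive for the converse has the roles of $\cg$ and $\ch$ swapped --- to deduce that d-separation in $\ch$ implies d-separation in $\cg$ you should start from a d-connecting path in $\cg$, not in $\ch$ --- though this is harmless since the hypothesis is symmetric. You correctly identify the genuine difficulty: the rerouting at shielded triples and the transfer of ancestry certificates for colliders require careful bookkeeping, and your phrase ``an interior non-collider there would let one reroute through the chord'' hides real work, since the shortcut $a-c$ may change the collider status of $a$ or $c$ relative to their other neighbour on $\pi$, so minimality alone does not immediately force every triple to be unshielded. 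If you want a cleaner way to close this, Chickering's covered-edge-reversal argument avoids the path surgery entirely.
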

\subsection{Causal discovery and the PC algorithm} \phantom{ } \\
The term \textit{causal discovery} is used to refer to the following task: 
\begin{boxy}
    \centering 
    \textit{Given data which is known to come from a graphical model $(\cg, X)$, recover $\cg$.}
\end{boxy} 
The PC algorithm of Spirtes and Glymour \cite{spirtes_algorithm_1991} is a constraint-based algorithm which outputs a graph approximating the true DAG $\cg$ from conditional independence statements. We describe the two steps of the algorithm under the idealized assumption that there exists a perfect test for asserting whether a given conditional independence statement $\CI{X_i, X_j |X_K}$ lies in $\globaldsep{\cg}$ (or equivalently, that the entire set $\globaldsep{\cg}$ is known). The first step is the \emph{skeleton learning step}, which retrieves the undirected skeleton of the true DAG $G := \skel(\cg)$ by querying the set $\globaldsep{\cg}$. 
\begin{algorithm}
\caption{Skeleton Learning Step of the PC Algorithm}
\label{alg:PCalgskel}
\Input{A complete set of CI statements $\globaldsep{\cg}$ coming from a graphical model faithful to d-separation}  \Output{The undirected skeleton of $\cg$}

Set $G$ to be the complete graph on $n$ nodes

\For{$\ell$ from $0$ to $\max_{(i,j) \in H} |\neighbors(\{i, j\})|$}{
    \For{each edge $(i, j) \in G$}{
        \For{each $K \subseteq \neighbors(i) \cup \neighbors(j) \setminus \{i, j\}$ s.t. $|K| = \ell$}{
            \If{$(i,k) \in G$ for all $k \in K$ \textbf{or} $(j, k) \in G$ for all $k \in K$}{
            \If{$[i \indep j | K] \in \globaldsep{\cg}$}{
                remove edge $(i,j)$ from $G$ \\
                Set $Sepset(i,j) := K$
            }
            }
        }
    }
}
\end{algorithm}
A proof of correctness of \Cref{alg:PCalgskel} may be found in \cite{geiger_d-separation_2013}. An important feature is present on line 2; in order to retrieve the adjacencies of the true DAG $\cg$ it is sufficient to query $\globaldsep{\cg}$ for CI statements $[i \indep j |K]$ in which the size of $K$ is bounded by the maximal in-degree of $\cg$. This is because any two non-adjacent nodes $i,j \in V$ are d- separated by a subset of either $\pa(i)$ or $\pa(j)$ \cite[Lemma 1]{verma_equivalence_2022}.
In particular, this ensures that the worst case performance of \Cref{alg:PCalgskel} is $n^{d+2}$, where $d$ is the maximal in-degree of a node in $\cg$, see \cite[p. 138]{meek_graphical_2023}. \\
After completing the skeleton learning step, the PC algorithm iterates through the edges of the undirected skeleton of $\cg$ and orients them according to the four rules stated in \Cref{misc:orientationrules}. We refer to this as the \emph{edge orientation step} of the PC algorithm. More details may be found in \cite{meek_graphical_2023}. It makes use of the sets $Sepset(i,j)$ collected in \Cref{alg:PCalgskel}, which for each  non-adjacent pair $\{i,j\}$ record a set $K$ of minimal cardinality such that $\CI{i,j|K} \in \globaldsep{\cg}.$
Specifically, the PC algorithm first iterates through the \emph{unshielded triples} of $G$ (i.e. triples of nodes $\{i,j,k\}$ such that $i-j$ and $j-k$ are edges of $G$, but $i-k$ is not) and orients them according to $\mathcal{R}1$. It then proceeds to apply the remaining rules, which are the only choices of edge directions which do not create directed cycles and/or additional unshielded colliders in the graph. The complexity of this procedure lies in $\mathcal{O}(n^3)$ \cite[p. 143]{meek_graphical_2023}, and as such remains bounded above by $n^{d+2}$. 
\begin{List}[H]
    \item For any unshielded triple $i - k - j $, orient the edges as $i \rightarrow k \leftarrow j$ \\ if $k \notin Sepset(i,j)$ holds.
    \item For any remaining unshielded triple oriented $i \rightarrow k - j$, orient it as $i \rightarrow k \rightarrow j$.
    \item For any shielded triple where $i \rightarrow k \rightarrow j$, orient $i-j$ as $i \rightarrow j$.
    \item For any four-node \enquote{diamond structure} such as the one depicted on the right in \Cref{fig:orientationrules}, orient $i-\ell$ as $i \rightarrow \ell$ if $j - \ell - k$ is oriented $j \rightarrow \ell \leftarrow k$. 
    \caption{The 4 edge orientation rules of PC}
     \label{misc:orientationrules}
\end{List}

\begin{figure}
    \begin{subfigure}[t]{0.4\linewidth}
        \centering 
        \begin{tikzpicture}[
    ->,
    >=stealth',
    shorten >= 1pt,
    node distance = 2cm and 1.6cm,
    thick,
    every node/.style={circle, draw, line width=1pt, minimum size=0.65cm, inner sep=0pt},
    every path/.style={thick, -{Stealth[length=8pt, width=8pt, scale=1]}}
]

\node (i) at (0, 2) {$i$};
\node (j) at (2, 2) {$j$};
\node (k) at (1, 0) {$k$};

\path (i) edge (k);
\path (k) edge (j);
\draw[-] (i) -- (j);

\node[draw=none, inner sep=0pt] at (2.8, 1) {\Large$\Rightarrow$};

\node (i2) at (3.4, 2) {$i$};
\node (j2) at (5.4, 2) {$j$};
\node (k2) at (4.4, 0) {$k$};

\path (i2) edge (j2);
\path (i2) edge (k2);
\path (k2) edge (j2);

\end{tikzpicture}
    \end{subfigure}
    \hspace{1cm}
    \begin{subfigure}[t]{0.4\linewidth}
        \centering 
        \begin{tikzpicture}[
    ->,
    >=stealth',
    shorten >= 1pt,
    node distance = 2cm and 1.6cm,
    thick,
    every node/.style={circle, draw, line width=1pt, minimum size=0.65cm, inner sep=0pt},
    every path/.style={thick, -{Stealth[length=8pt, width=8pt, scale=1]}}
]

\node (i) at (2, 2) {$i$};
\node (j) at (1, 1) {$j$};
\node (k) at (3, 1) {$k$};
\node (l) at (2, 0) {$\ell$};

\draw[-] (i) -- (j);
\draw[-] (i) -- (k);
\path (j) edge (l);
\path (k) edge (l);
\draw[-] (i) -- (l);

\node[draw=none, inner sep=0pt] at (4, 1) {\Large$\Rightarrow$};

\node (i2) at (6, 2) {$i$};
\node (j2) at (5, 1) {$j$};
\node (k2) at (7, 1) {$k$};
\node (l2) at (6, 0) {$\ell$};

\draw[-] (i2) -- (j2);
\draw[-] (i2) -- (k2);
\path (j2) edge (l2);
\path (k2) edge (l2);
\path (i2) edge (l2);

\end{tikzpicture}
    \end{subfigure}
    \caption{Depiction of the orientation rules $\mathcal{R}3$ and $\mathcal{R}4$.}
    \label{fig:orientationrules}
\end{figure}
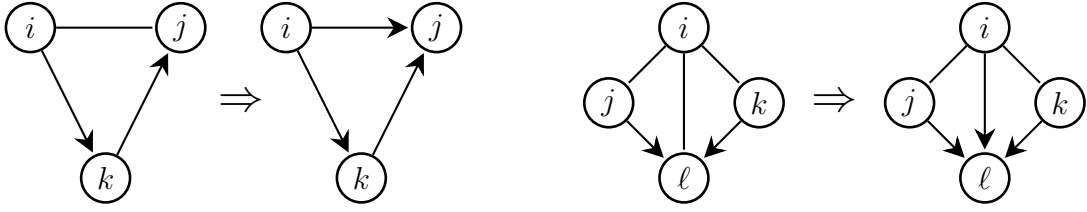
The output of the edge orientation step is a \textit{complete partially oriented directed acyclic graph}, or CPDAG. It has the same unshielded colliders as the true DAG $\cg$ and thus represents the Markov equivalence class of $\cg$ due to \Cref{thm:dsepmarkoveq}. It is the best approximation of $\cg$ that can be obtained from the conditional independence information of a linear structural equation model on $\cg$  without additional interventional data \cite[Theorem 3]{meek_causal_2013}.

\subsection{Star separation criteria for MLBNs}

Max-Linear Bayesian networks are generally not faithful to d-separation; more precisely,  additional conditional independence may hold in the model which is not implied by d-separation. 

This motivated the introduction of stronger separation conditions for Max-Linear Bayesian Networks. This was done in \cite{amendola_conditional_2022}. We recall the relevant notions and definitions of both weighted and unweighted \emph{star separation}. 
\subsection{Unweighted $\ast$-separation}

We adopt the characterization of $\starsep$ which was developed in \cite{amendola21markov}. An undirected $i-j$ path in a DAG $\cg$ is \emph{$\ast$-connecting} given $K \subset V \setminus {ij}$ if it is d-connecting and contains at most one collider.
If no such path in $\cg$ exists, we say that $i$ and $j$ are \emph{$\ast$-separated} given $K$, and denote this by $\starsepgiven{i}{j}{K}$.
\begin{example}\cite[Example 4.2]{amendola_conditional_2022} \label{eg:cassio}
    \begin{figure}[h!]
    \centering
    \begin{tikzpicture}[
    ->,
    >=stealth',
    shorten >= 1pt,
    thick,
    every node/.style={circle, draw, line width=1pt, minimum size=0.65cm, inner sep=0pt},
    every path/.style={thick, -{Stealth[length=8pt, width=8pt, scale=1]}}
]

\node (1) at (0, 2) {$1$};
\node (2) at (2, 2) {$2$};
\node (3) at (4, 2) {$3$};

\node[fill=orange!40!white] (4) at (1, 0) {$4$};
\node[fill=orange!40!white] (5) at (3, 0) {$5$};

\path (1) edge (4);
\path (2) edge (4);
\path (2) edge (5);
\path (3) edge (5);

\end{tikzpicture}
    \caption{The Cassiopeia graph}
    \label{fig:cassio}
\end{figure}
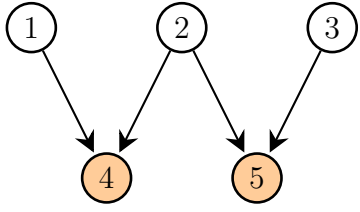
    In \Cref{fig:cassio}, the nodes $1$ and $2$ are $\ast$-connected given $K = \{4,5\}$. $1$ and $3$ are $\ast-$separated given the same $K$. In the max-linear model on this DAG for any choice of coefficients $c_{ji}$, the CI statement $\CI{1,3|45}$ holds. 
\end{example}
From the definition of $\starsep$ it is cleat that $\dsepgiven{i}{j}{K}$ implies $\starsepgiven{i}{j}{K}$ for all $i,j,K$, and that the converse implication also holds when $|K| \leq 1$. \\
On the level of the global Markov properties as in \eqref{def:globalmarkov}, this means that 
    \begin{align} \label{eq:dstarglobalmarkov}
        \globaldsep{\cg} \subset \globalstarsep{\cg}. 
    \end{align}

\Cref{eg:cassio} is an instance where strict inclusion in \eqref{eq:dstarglobalmarkov} holds. Despite this phenomenon the corresponding relations $\sim_d$ and $\sim_\ast$ partition the set of all DAGs into the same Markov equivalence classes.

\begin{theorem}[\cite{amendola21markov}, Theorem 3.4] \label{thm:dstarmarkovclass}
For any two DAGs $\cg$ and $\ch$: $\ch \equivd \cg \iff \ch \equivstar \cg$.

\end{theorem}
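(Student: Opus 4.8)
By \Cref{thm:dsepmarkoveq}, a d-Markov equivalence class is exactly determined by a skeleton together with its set of unshielded colliders, so the theorem reduces to two claims: \textbf{(I)} $\globalstarsep{\cg}$ depends only on $\skel(\cg)$ and the unshielded colliders of $\cg$; and \textbf{(II)} $\skel(\cg)$ and the unshielded colliders of $\cg$ can be recovered from $\globalstarsep{\cg}$. Granting these, \textbf{(II)} together with \Cref{thm:dsepmarkoveq} yields $\cg \equivstar \ch \Rightarrow \cg \equivd \ch$, and \textbf{(I)} together with \Cref{thm:dsepmarkoveq} yields the converse. Claim \textbf{(II)} is elementary; claim \textbf{(I)} carries the real content.

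For \textbf{(II)}, I would first observe that $i$ and $j$ are adjacent in $\cg$ if and only if $\starsepgiven{i}{j}{K}$ fails for every $K\subseteq V\setminus\{i,j\}$: an edge between $i$ and $j$ is on its own a d-connecting, hence by \eqref{eq:dstarglobalmarkov} an $\ast$-connecting, path for every $K$; conversely, if $i$ and $j$ are non-adjacent then by \cite[Lemma 1]{verma_equivalence_2022} they are d-separated, hence $\ast$-separated, by some subset of $\pa(i)$ or $\pa(j)$. This recovers $\skel(\cg)$. Given the skeleton, for an unshielded triple $i-k-j$ I claim that $k$ is a collider in $\cg$ if and only if no $K$ with $\starsepgiven{i}{j}{K}$ contains $k$. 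If $k$ is a collider, then for any $K\ni k$ the two-edge path $i\to k\leftarrow j$ is d-connecting given $K$ (its unique collider $k$ lies in $\An(K)$ and it has no non-collider), hence $\ast$-connecting, so $\starsepgiven{i}{j}{K}$ fails. If $k$ is a non-collider, then for any $K$ with $k\notin K$ the path $i-k-j$ is collider-free and d-connecting given $K$, hence $\ast$-connecting, so every $\ast$-separating set of $i$ and $j$ contains $k$; and such a set exists by \cite[Lemma 1]{verma_equivalence_2022} and \eqref{eq:dstarglobalmarkov}. This settles \textbf{(II)}.

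For \textbf{(I)}, since $\ast$-separation is defined by forbidding d-connecting paths between $i$ and $j$ with at most one collider, I would argue that the \emph{existence} of such a path given $K$ can be expressed through d-separation statements in graphs whose skeleton and unshielded colliders are determined by those of $\cg$, so that it is itself a d-Markov invariant and \textbf{(I)} follows from \Cref{thm:dsepmarkoveq}. The zero-collider case is clean: a collider-free d-connecting path given $K$ uses no node of $K$, so it is exactly a path from $i$ to $j$ that is d-connecting given $\emptyset$ in the induced subgraph $\cg\setminus K$; the skeleton of $\cg\setminus K$ is induced from $\skel(\cg)$ and its unshielded colliders are precisely those unshielded colliders of $\cg$ all of whose nodes avoid $K$, so $\cg\setminus K\equivd\ch\setminus K$ and the statement transfers. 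For the one-collider case, a path with unique collider $c$ decomposes into two collider-free ``feather'' halves $i\leftarrow\cdots\leftarrow m_i\to\cdots\to c$ and $j\leftarrow\cdots\leftarrow m_j\to\cdots\to c$ with all nodes except $c$ lying outside $K$, together with the activation condition $c\in\An_\cg(K)$; I would recast the existence of each feather, and of a compatible activated collider, as d-separation data in pendant-augmented subgraphs of $\cg$ that are again d-Markov invariant.

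The principal obstacle is this one-collider case, where two difficulties absent in the zero-collider case appear. First, the activation condition $c\in\An_\cg(K)$ is not by itself a d-Markov invariant — edge reversals within the class can destroy ancestral relations — so one must additionally show that if no admissible collider can be activated then a zero-collider $\ast$-connecting path exists instead, reducing to the case already handled. Second, splicing two feathers and their common collider into a genuine \emph{path} rather than a walk requires a node-disjointness argument, since naive excision of a repeated node may introduce a new collider; this forces one to choose the feathers of minimal length and reroute carefully through any shared node. An alternative and arguably cleaner route to \textbf{(I)} is to invoke the covered-edge-reversal characterization of d-Markov equivalence and to check directly that reversing a covered edge $u\to v$ (so that $\pa(v)=\pa(u)\cup\{u\}$) leaves $\globalstarsep{\cg}$ unchanged: any $\ast$-connecting path that crosses the edge between $u$ and $v$ can be rerouted locally after the reversal because $u$ and $v$ share all of their other parents, and the crux is then the bookkeeping of which colliders remain activated once that edge is flipped.
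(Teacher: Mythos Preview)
The paper does not supply its own proof of this theorem: it is stated with a citation to \cite{amendola21markov} and used as a black box thereafter. So there is no in-paper argument to compare against; I can only assess your proposal on its own merits.

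Your reduction to claims \textbf{(I)} and \textbf{(II)} is the right decomposition, and your argument for \textbf{(II)} is complete and correct. Indeed, the paper's \Cref{lem:nonadjsep}, \Cref{lem:starsepgivenparents}, and \Cref{lem:starsepcolliders} reprove exactly those ingredients (adjacency is determined by $\globalstarsep{\cg}$; unshielded colliders are determined by the $|K|\le 1$ statements, where $\dsep$ and $\starsep$ coincide), so \textbf{(II)} is on solid ground.

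Claim \textbf{(I)} is where your proposal stops short of a proof. The zero-collider case is clean and your reduction to $\cg\setminus K$ is correct: induced subgraphs of d-Markov equivalent DAGs remain d-Markov equivalent, since deleting nodes preserves both skeleton and the set of unshielded colliders supported on the remaining vertices. But for the one-collider case you have, by your own admission, only named the obstacles rather than overcome them. The activation condition $c\in\An(K)$ is genuinely not a d-Markov invariant on its own, and the ``if no admissible collider can be activated then a zero-collider path exists instead'' fallback you propose is exactly the nontrivial combinatorial lemma that needs to be proved; you have not indicated how. Likewise, the covered-edge-reversal route you mention as an alternative is a standard and workable strategy, but ``the crux is then the bookkeeping'' is not a proof --- one must actually exhibit, for each $\ast$-connecting path crossing the flipped edge $u\to v$, a replacement $\ast$-connecting path in the reversed graph, and the case analysis (does the path enter at $u$ or $v$, is the crossing node a collider or not, is the collider still activated) is where the content lies. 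As written, your proposal is an accurate map of the territory for \textbf{(I)} but not a traversal of it.
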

The behavior of the PC algorithm under $\starsep$ is investigated in \Cref{unweightedPC}.
\subsection{The weighted case: C*-separation}
We now consider the more general setting of \emph{weighted} Max-Linear Bayesian Networks. Let $\cg$ be a DAG on $n$ nodes with edge set $E$ and denote the set of coefficient matrices supported on $\cg$ by~$\mathbb{R}^E$; this is the set of all $n \times n$ matrices $C$ with $c_{ij} = 0$ if $i \to j \notin E$ and $c_{ij} \in \mathbb{R}_{> 0}$ otherwise. We call a pair $(\cg,C)$ a \emph{weighted DAG}. A random vector $X$ is distributed according to the MLBN on $(\cg, C)$ if it fulfills \eqref{eqn:mlbn}. The introduction of edge weights brings with it additional challenges, which we illustrate with two examples. Firstly, the set of CI statements which hold in a weighted MLBN depend on the choice of weight matrix $C$. 
\begin{example} \label{eg:3nodeDAG}
Consider the complete weighted DAG $(\cg,C)$ on 3 nodes.

\begin{figure}[H]
\centering
\begin{tikzpicture}
    \begin{scope}[->, -Stealth, every node/.style={circle,draw},line width=1pt, node distance=1.8cm]
    \node (1) {$1$};
    \node (2) [right of=1] {$2$};
    \foreach \from/\to in {1/2}
    \draw (\from) -- (\to);
    \path[every node/.style={font=\sffamily\small}]
    (1) -- (2) node [midway, above] {$c_{12}$};
    \node (3) [ right of=2] {$3$};
    \foreach \from/\to in {2/3}
    \draw (\from) -- (\to);
    \path[every node/.style={font=\small}]
    (2) -- (3) node [midway, above] {$c_{23}$};
    \foreach \from/\to in {1/3}
    \draw (\from) to[bend right=40] (\to);
    \path[every node/.style={font=\small}]
    (1) -- (3) node [midway, yshift = -30pt] {$c_{13}$};

\end{scope}
\end{tikzpicture}

\caption{A complete weighted DAG on 3 nodes.}
\label{fig:3nodecompleteDAG}
\end{figure}
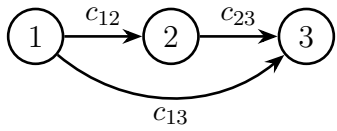
The sets of CI statements which hold in the corresponding MLBN are 
\begin{align*}
     \begin{cases}
        \ \bigl\{ 1 \indep 3 | 2 \bigr\} \ \ \ &\text{if} \ \ c_{13} \leq c_{12}c_{23} \\ 
    \ \  \ \ \ \ \emptyset \ \ \ &\text{if} \ \ \  c_{13} > c_{12}c_{23}. 
    \end{cases}
\end{align*}
The conditional independence statement in the first case arises due to the fact that if the inequality $c_{13} \leq c_{12}c_{23}$ holds, the structural equation for $X_3$ can be rewritten as 
\begin{align*}
    X_3 =& \ c_{13}X_1 \vee c_{23}X_2 \vee Z_3 \\
    =& \ c_{13}X_1 \vee c_{23}(c_{12}X_1 \vee Z_2) \vee Z_3 \\
    =& \ (c_{13} \vee c_{12}c_{23})X_1 \vee c_{23}Z_2 \vee Z_3 \\ 
    =& \  c_{12}c_{23}X_1 \vee c_{23}Z_2 \vee Z_3 \\ 
    =& \ c_{23}X_2 \vee Z_3.
\end{align*}
In particular, $X_1$ influences $X_3$ only indirectly via $X_2$.
\end{example}
Furthermore, weighted MLBNs are not faithful to the condition $\starsep$ from the previous subsection. 

\begin{example} \label{eg:21diamond}
Consider the \textit{21-diamond} with edge weights chosen such that \\ 
$c_{24} < c_{21}c_{13}c_{34}$ and corresponding structural equations:

\begin{minipage}{0.45\textwidth}
    \centering
    \begin{figure}[H]
        \begin{tikzpicture}
    \begin{scope}[every node/.style={circle, draw}, node distance=1.8cm, line width=1pt, 
    every path/.style={thick, -{Stealth[length=5pt, width=5pt, scale=1]}}]
        \node (1) {$1$};
        \node (2) [below left of=1] {$2$};
        \foreach \from/\to in {2/1}
        \draw (\from) -- (\to);
        \path[every node/.style={font=\sffamily\small}]
        (1) -- (2) node [near start, left] {$c_{21}$};
        
        \node (3) [below right of=1] {$3$};
        \foreach \from/\to in {1/3}
        \draw (\from) -- (\to);
        \path[every node/.style={font=\small}]
        (1) -- (3) node [near start, right] {$c_{13}$};
        
        \node (4) [below right of=2] {$4$};
        \foreach \from/\to in {2/4,3/4}
        \draw (\from) -- (\to);
        \path[every node/.style={font=\small}]
        (2) -- (4) node [near end, left] {$c_{24}$};
        \path[every node/.style={font=\small}]
        (3) -- (4) node [near end, right] {$c_{34}$};
    \end{scope}
    \label{fig:21diamond}
\end{tikzpicture}
    \end{figure}
  
\end{minipage}
\begin{minipage}{0.45\textwidth}
\vspace{8pt}
    \begin{align*}
        X_1 &= c_{21}X_2 \vee Z_1 \\
        X_2 &= Z_2 \\
        X_3 &= c_{13}X_1 \vee Z_3 = c_{13}(c_{21}Z_2 \vee Z_1) \vee Z_3 \\ 
        X_4 &= c_{24}X_2 \vee c_{34}X_3 \vee Z_4 \\
    \end{align*}
\end{minipage}

In particular, we observe that the conditional independence statement $[X_2 \indep X_4 | X_3]$ holds in this model. However, $\CI{2,4|3}$ is not a statement in $\globalstarsep{\cg}$  as the edge $2 \rightarrow 4$ is a $\ast-$connecting path given $K = \{3 \}$.
\end{example}

Intuitively, the reason for the additional CI statement in the example above is that, due to this particular choice of edge weights, the edge $2 \rightarrow 4 $ is no longer \emph{critical}; the random variable $X_2$ influences $X_4$ only indirectly via the path $2 \rightarrow 1 \rightarrow 3 \rightarrow 4$.Phenomena such as this one motivate yet another notion of graph separation which can account for critical paths. This was introduced in \cite{amendola_conditional_2022}.
Let $(\cg,C)$ be a weighted DAG on $n$ nodes and $i,j \in V(\cg)$. We denote the set of all directed paths from $i$ to $j$ by $P(i,j)$. For a directed path $\pi = i \rightarrow l_1 \rightarrow \dots \rightarrow l_m \rightarrow j \in P(i,j)$, we define its \emph{weight} in $(\cg,C)$ as
    $\omega_C(\pi) = c_{il_1}c_{l_1l_2}\cdot \dots \cdot c_{l_mj}$.
We say that a directed path $\pi \in P(i,j)$ is \emph{critical} if its weight $\omega_C(\pi)$ attains the maximal value over all $P(i,j)$. 
The weights of the critical paths of $(\cg,C)$ are encoded in the \emph{Kleene star} of $C$. This is the matrix
\begin{align*}
    C^\ast = (c^*_{ij})_{i,j \in [n]} = \max_{\pi \in P(i, j)} \omega_C(\pi) \ , 
\end{align*}
which may be computed by taking a tropical power of $C$ (\cite{butkovic_max-linear_2010}, Proposition 1.6.15).

To define a notion of graph separation for weighted DAGs we associate a graph to each conditioning set $K$. We say that a directed path $\pi$ from $i$ to $j$ \emph{factors through K} if there exists a node $k$ along $\pi$ such that $k \in K \setminus ij$.
Let $\cg$ be a DAG on $n$ nodes and $K \subset V$. The \textit{critical DAG} of $(\cg,C)$ with respect to $K$, denoted $\cg^\ast_K(C)$ is the unweighted DAG with node set $V$ and edges determined by the condition 
    \begin{center}
    \(
    i \rightarrow j \in \cg^\ast_K(C) 
    \iff 
    \begin{aligned}[t]
        & c^\ast_{ij} > 0 \text{ and no critical directed path} \\
        &\text{ from } i \text{ to } j \text{ factors through } K.
    \end{aligned}
    \)
    \end{center}
    If $C$ is fixed and clear from context, we may write $\critdag{K}$ for readability. We say that two nodes $i$ and $j$  in $V \setminus K$ are \emph{$C^\ast-$connected} given $K$ if there is a path from $i$ to $j$ in $\critdag{K}$ of one of the five types (a) - (e) depicted in \Cref{fig:5paths}. If no such path in $\critdag{K}$ exists, we say that $i$ and $j$ are \emph{$C^\ast$-separated} given $K$, and denote this by $\Csepgiven{i}{j}{K}$.

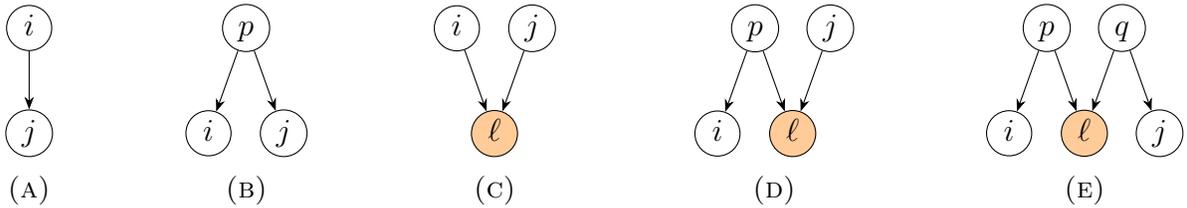
\begin{figure}[H]
    \centering
\begin{subfigure}[t]{0.10\linewidth}
\centering
\begin{tikzpicture}[inner sep=1pt]
\node[draw, circle] (i) at (0,.7) {\strut$i$};
\node[draw, circle] (j) at (0,-.7) {\strut$j$};
\draw[-{Stealth}] (i) -- (j);
\end{tikzpicture}
\caption{}
\label{fig:*-sep:a}
\end{subfigure}
\hfill
\begin{subfigure}[t]{0.15\linewidth}
\centering
\begin{tikzpicture}[inner sep=1pt]
\node[draw, circle] (i) at (-.5,-.7) {\strut$i$};
\node[draw, circle] (p) at (  0,.7) {\strut$p$};
\node[draw, circle] (j) at (+.5,-.7) {\strut$j$};
\draw[-{Stealth}] (p) -- (i);
\draw[-{Stealth}] (p) -- (j);
\end{tikzpicture}
\caption{}
\label{fig:*-sep:b}
\end{subfigure}
\hfill
\begin{subfigure}[t]{0.15\linewidth}
\centering
\begin{tikzpicture}[inner sep=1pt]
\node[draw, circle] (i) at (-.5,.7) {\strut$i$};
\node[draw, circle, fill=orange!40!white] (l) at (  0,-.7) {\strut$\ell$};
\node[draw, circle] (j) at (+.5,.7) {\strut$j$};
\draw[-{Stealth}] (i) -- (l);
\draw[-{Stealth}] (j) -- (l);
\end{tikzpicture}
\caption{}
\label{fig:*-sep:c}
\end{subfigure}
\hfill
\begin{subfigure}[t]{0.2\linewidth}
\centering
\begin{tikzpicture}[inner sep=1pt]
\node[draw, circle] (i) at (-.5,-.7) {\strut$i$};
\node[draw, circle] (p) at (  0,+.7) {\strut$p$};
\node[draw, circle, fill=orange!40!white] (l) at (+.5,-.7) {\strut$\ell$};
\node[draw, circle] (j) at (1.0,+.7) {\strut$j$};
\draw[-{Stealth}] (p) -- (i);
\draw[-{Stealth}] (p) -- (l);
\draw[-{Stealth}] (j) -- (l);
\end{tikzpicture}
\caption{}
\label{fig:*-sep:d}
\end{subfigure}
\hfill
\begin{subfigure}[t]{0.2\linewidth}
\centering
\begin{tikzpicture}[inner sep=1pt]
\node[draw, circle] (i) at (-.5,-.7) {\strut$i$};
\node[draw, circle] (p) at (  0,+.7) {\strut$p$};
\node[draw, circle, fill=orange!40!white] (l) at ( .5,-.7) {\strut$\ell$};
\node[draw, circle] (q) at (1.0,+.7) {\strut$q$};
\node[draw, circle] (j) at (1.5,-.7) {\strut$j$};
\draw[-{Stealth}] (p) -- (i);
\draw[-{Stealth}] (p) -- (l);
\draw[-{Stealth}] (q) -- (l);
\draw[-{Stealth}] (q) -- (j);
\end{tikzpicture}
\caption{}
\label{fig:*-sep:e}
\end{subfigure}
\caption{
The types of $\ast$-connecting paths between $i$ and $j$ given $K$ in a critical DAG~$\critdag{K}$. The~colored colliders $\ell$ must belong to $K$; the non-colliders $p,q$ must not belong to~$K$.}

    \label{fig:5paths}
\end{figure}
It was proven in \cite{amendola_conditional_2022} that for all but a zero-measure set of weight matrices in $\mathbb{R}^E$, weighted MLBNs are faithful to the $\Csep$ condition in the sense that
    \begin{align} 
        \Csepgiven{i}{j}{K} \ \text{holds in } (\cg,C) \iff \CI{X_i, X_j | X_K}  \ \text{holds in } X , 
    \end{align}
where $X = (X_1, \dots X_n)$ is a MLBN on $(\cg, C)$. Equivalently, the \emph{global Markov property} of a weighted DAG $(\cg, C)$ w.r.t. $\Csep$
\begin{align*} \label{def:Csepmarkov}
    \glob_{C^\ast}(\cg,C) = \bigl\{ [i \indep j |K] \ \text{s.t} \ \Csepgiven{i}{j}{K} \ \text{holds in} \ (\cg, C) \bigr\} 
\end{align*}
encodes the entire conditional independence structure of its corresponding MLBN.

\begin{proposition} \label{prop:markovinclusions}
Let $\cg$ be a DAG. For any choice of coefficient matrix $C \in \mathbb{R}^E$ it holds that 
\begin{align} 
\globaldsep{\cg} \subset \globalstarsep{\cg} \subset \globalCsep{\cg}{C}.
\end{align}
\end{proposition}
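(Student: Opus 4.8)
The first inclusion $\globaldsep{\cg}\subseteq\globalstarsep{\cg}$ is immediate and was already noted around \eqref{eq:dstarglobalmarkov}: every $\ast$-connecting path is in particular d-connecting, so the absence of a d-connecting $i$--$j$ path given $K$ forces the absence of a $\ast$-connecting one, i.e.\ $\dsepgiven{i}{j}{K}$ implies $\starsepgiven{i}{j}{K}$. The real content is the second inclusion $\globalstarsep{\cg}\subseteq\globalCsep{\cg}{C}$, which I would prove in contrapositive form: \emph{if $i$ and $j$ are $C^\ast$-connected given $K$ in the critical DAG $\critdag{K}$, then $i$ and $j$ are $\ast$-connected given $K$ in $\cg$.}

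The idea is to lift a $C^\ast$-connecting path of $\critdag{K}$ to a $\ast$-connecting path of $\cg$. The engine is the defining property of $\critdag{K}$: every edge $a\to b$ of $\critdag{K}$ is backed by a critical directed path $P_{ab}$ from $a$ to $b$ in $\cg$ no interior node of which lies in $K$. A $C^\ast$-connecting path between $i$ and $j$ has one of the five shapes (a)--(e) of \Cref{fig:5paths}; in each case it is d-connecting in $\critdag{K}$, has at most one collider, and when a collider is present that collider is a node $\ell\in K$, while the interior non-colliders $p,q$ (if any) lie outside $K$. Replacing each of its edges by the corresponding $P_{ab}$ produces a walk $W$ from $i$ to $j$ in $\cg$ (a walk in the skeleton of $\cg$, alternately ascending and descending along directed segments) with a very constrained profile: reading $W$ from $i$ to $j$, it has at most one \emph{valley} (a non-endpoint node both of whose incident walk-edges point into it), that valley --- when present --- equals $\ell$, we have $\ell\in K\subseteq\An(K)$, and every node of $W$ other than $\ell$ lies outside $K$.

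It then remains to extract from $W$ a node-simple $i$--$j$ path that is still $\ast$-connecting. I would isolate this as a lemma: any walk $W$ from $i$ to $j$ in a DAG with at most one valley, that valley (if present) lying in $\An(K)$, and every other node of $W$ lying outside $K$, contains a $\ast$-connecting $i$--$j$ path given $K$. The lemma is proved by induction on the length of $W$. If $W$ is already node-simple it is itself such a path: its non-collider nodes avoid $K$, and its at most one collider is the valley, which lies in $\An(K)$. Otherwise $W$ revisits a node $v$; since a directed segment is node-simple, the two occurrences of $v$ lie on two different ascending/descending segments, and one deletes the detour of $W$ between them. The bookkeeping to check is that this shortening preserves all three hypotheses, after which induction applies to the shorter walk.

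The heart of the matter --- and the one step that needs genuine care --- is that a detour-deletion can create a valley at $v$ only in the configuration where one occurrence of $v$ sits on a descending segment that terminates at the old valley $\ell$; in that configuration $v$ is an ancestor of $\ell$, hence $v\in\An(K)$, and since the deleted detour contains $\ell$ the shortened walk again has at most one valley. In every other segment-pair configuration $v$ becomes a source (a harmless local maximum) or a pass-through, the old valley survives untouched, and no node of $K$ is ever promoted to a non-collider. Carrying this finite case analysis through --- there are only a handful of configurations, since the profile of $W$ has at most two peaks and one valley --- completes the lemma, and applying it to the lifted walk $W$ yields the desired $\ast$-connecting path, establishing $\globalstarsep{\cg}\subseteq\globalCsep{\cg}{C}$.
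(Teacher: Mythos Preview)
Your argument is correct, and it is genuinely different from what the paper does: the paper does not prove \Cref{prop:markovinclusions} at all but simply records that it is a reformulation of \cite[Corollary~5.9]{amendola_conditional_2022}. So there is no in-paper proof to compare against; what you have supplied is a self-contained proof that the cited reference establishes by other means.

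Your strategy---expand each edge of a $C^\ast$-connecting path in $\critdag{K}$ into a critical directed path in $\cg$ avoiding $K$ in its interior, obtain a walk with at most one valley, and shortcut to a simple path while tracking the valley---is sound. One small refinement you should make explicit when you carry out the case analysis: add to your lemma's hypotheses that the valley node, if it lies in $K$, occurs only once in the walk. This is true of the initial lifted walk (the only $K$-node is $\ell$, and $\ell$ cannot recur since every other segment-endpoint and every interior node lies outside $K$), and you should check it is preserved under shortcutting. Without it your hypothesis ``every other \emph{node} of $W$ lies outside $K$'' could in principle allow the valley node to reappear at a non-collider position, which would obstruct d-connectivity of the extracted path. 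With this addition the induction closes cleanly: either the old valley survives as the unique valley, or it is deleted and the new junction valley $v$ is a proper ancestor of it (hence in $\An(K)$) and lies outside $K$, so the strengthened hypothesis holds trivially for the shortened walk.

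What your approach buys over a bare citation is a transparent combinatorial picture of why $C^\ast$-connection forces $\ast$-connection, phrased entirely in terms of walk profiles; this is useful pedagogically and makes the dependence on the structure of $\critdag{K}$ explicit.
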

This is a reformulation of Corollary 5.9 of \cite{amendola_conditional_2022}.
\Cref{eg:cassio} and \Cref{eg:21diamond} are instances of strict inclusion in the first and second inclusions respectively; the latter fact will be shown in \Cref{eg:21diamondCsep}. 

\section{The PC algorithm in unweighted MLBNs} 
\label{unweightedPC}
The goal of this section is to prove the following statement: 

\begin{theorem} \label{prop:dstarsameoutput}
Let $\cg$ be a DAG, and $\globaldsep{\cg}$ and $\globalstarsep{\cg}$ be its global Markov properties with respect to $\dsep$ resp. $\starsep$. Applying the PC algorithm to $\globalstarsep{\cg}$ returns the same CPDAG as the one obtained upon taking $\globaldsep{\cg}$ as input.
\end{theorem}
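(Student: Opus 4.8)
The plan is to run the two phases of the PC algorithm --- the skeleton learning step of \Cref{alg:PCalgskel}, and the edge orientation step governed by \Cref{misc:orientationrules} --- in parallel on the two inputs $\globaldsep{\cg}$ and $\globalstarsep{\cg}$, and to show that each phase produces the same object in both cases. The point is that the standard correctness argument for PC uses only a handful of structural features of the global Markov property it is fed, and every one of them is shared by $\dsep$ and $\starsep$; the extra statements in $\globalstarsep{\cg}\setminus\globaldsep{\cg}$ (which are genuinely present, by \Cref{eg:cassio}) turn out to be invisible to the algorithm. One useful preliminary remark: for conditioning sets of size $|K|\le 1$ the relations $\dsep$ and $\starsep$ coincide, so the two runs are literally identical through levels $\ell=0,1$ of \Cref{alg:PCalgskel}, and any divergence between them must occur at level $\ell\ge 2$.

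\textbf{Step 1: the skeleton step returns $\skel(\cg)$ on both inputs.} I would first observe that no edge of $\skel(\cg)$ is ever deleted in the $\starsep$ run: if $i,j$ are adjacent in $\cg$, the single edge between them is a $d$-connecting path given \emph{every} $K\subseteq V\setminus\{i,j\}$ (vacuously, having no interior vertex) and hence also $\ast$-connecting, so $\CI{i,j|K}\notin\globalstarsep{\cg}$ and line~7 never fires for that pair. Now let $i,j$ be non-adjacent in $\cg$. By \cite[Lemma~1]{verma_equivalence_2022} there is a witness $K^\ast$ which is a subset of $\pa_\cg(i)$ or of $\pa_\cg(j)$ with $\dsepgiven{i}{j}{K^\ast}$, and then \eqref{eq:dstarglobalmarkov} gives $\starsepgiven{i}{j}{K^\ast}$ as well. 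Since the edges of $\skel(\cg)$ all survive, the working graph $G$ contains $\skel(\cg)$ at every stage, so $K^\ast\subseteq\pa_\cg(i)\subseteq\neighbors_G(i)$ (say) and $i,j\notin K^\ast$; hence by the time the outer loop reaches $\ell=|K^\ast|$ --- if the edge $(i,j)$ has not been removed at a lower level already --- the set $K^\ast$ is among the conditioning sets examined on line~4, the adjacency check on line~5 passes, and line~7 removes $(i,j)$. So both runs end with $G=\skel(\cg)$; this is exactly the argument of \cite{geiger_d-separation_2013} with ``$d$-separated'' relaxed to ``$\ast$-separated'' via \eqref{eq:dstarglobalmarkov}.

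\textbf{Step 2: the orientation step returns the same CPDAG.} Both runs now enter the orientation step with the same skeleton, hence with the same unshielded triples. Rules $\mathcal{R}2$--$\mathcal{R}4$ are deterministic functions of the skeleton together with the colliders oriented by $\mathcal{R}1$, so it suffices to show $\mathcal{R}1$ orients the same triples. For an unshielded triple $i-k-j$, rule $\mathcal{R}1$ orients $i\to k\leftarrow j$ iff $k\notin Sepset(i,j)$, and I claim that under \emph{either} oracle this happens exactly when $\{i,k,j\}$ is an unshielded collider of $\cg$. Indeed, if $i\to k\leftarrow j$ in $\cg$ and $k\in K$ for some set $K$ separating $i$ and $j$, then the two-edge path $i\to k\leftarrow j$ has its single collider $k\in\An(K)$ and is therefore both $d$- and $\ast$-connecting, a contradiction; so $k$ lies in \emph{no} separating set. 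If instead $k$ is a non-collider on $i-k-j$, the two-edge path through $k$ is $d$-connecting --- equivalently $\ast$-connecting, as it carries no collider --- unless $k\in K$, so $k$ lies in \emph{every} separating set. Each run records \emph{some} separating set as $Sepset(i,j)$, so the truth value of ``$k\in Sepset(i,j)$'' is pinned down identically, and $\mathcal{R}1$ recovers exactly the unshielded colliders of $\cg$ in both runs. Since the CPDAG output by the orientation step is determined by the skeleton together with these colliders --- it is the canonical representative of the Markov equivalence class they define, by \Cref{thm:dsepmarkoveq} --- the two outputs coincide.

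\textbf{The main obstacle.} I expect the delicate point to be the bookkeeping in Step~1: the $\starsep$ oracle may return a separating set different from, and possibly of strictly smaller cardinality than, the one the $\dsep$ oracle would, and it may do so at a different level $\ell$, so one cannot simply assert that ``the same CI queries succeed''. The resolution is the observation that no edge of $\skel(\cg)$ is ever removed, which keeps the $d$-separation witness $K^\ast$ available as a subset of $\neighbors_G(i)$ (or $\neighbors_G(j)$) until level $|K^\ast|$; one should check that this is compatible with the precise neighbour-restriction logic on lines~2, 4 and~5 of \Cref{alg:PCalgskel}, exactly as in the classical correctness proof. A secondary subtlety is that the collected sets $Sepset(i,j)$ need not agree between the two runs --- but only the membership of the middle vertex of each unshielded triple enters $\mathcal{R}1$, and that, as shown above, is forced either way.
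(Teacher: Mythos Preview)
Your proof is correct and follows the same two-phase decomposition as the paper: first show the skeleton step returns $\skel(\cg)$ on both inputs (your Step~1 is essentially \Cref{lem:nonadjsep}, \Cref{lem:starsepgivenparents} and \Cref{prop:dstarskeleton} combined), then show the orientation step agrees. Your treatment of the orientation step is in fact more careful than the paper's \Cref{lem:starsepcolliders}: the paper argues tersely that $\mathcal{R}1$ ``depends entirely on conditional independence statements with $|K|=1$'', whereas you give the full standard argument that $k$ lies in \emph{every} separating set when it is a non-collider and in \emph{no} separating set when it is a collider, so the possibly different $Sepset(i,j)$ recorded by the two runs still yield identical orientations --- exactly the subtlety you flag in your final paragraph.
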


Due to \Cref{thm:dstarmarkovclass} this is a desirable outcome, as a graph with the same skeleton and unshielded colliders as $\cg$ is a representative of the $\starsep$ Markov equivalence class of $\cg$. The first three results of this section concern the skeleton retrieval step.
We say that two nodes $i$ and $j$ are \textit{separated} with respect to a separation condition $\perp$ if $[i \perp j | K]$ holds for some $K \subset V \setminus {ij}$.
\begin{lemma} \label{lem:nonadjsep}
    Let $\cg = (V,E)$ be a DAG and $i,j \in V$. The following are equivalent: 
    \begin{enumerate}
        \item $i$ and $j$ are non-adjacent
        \item $i$ and $j$ are d-separated
        \item $i$ and $j$ are $*$-separated
    \end{enumerate}
\end{lemma}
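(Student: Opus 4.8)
The plan is to establish the cycle of implications $(1)\Rightarrow(2)\Rightarrow(3)\Rightarrow(1)$. Two of the three arrows are essentially formal once the definitions are unwound, so only one of them carries real content, and the whole argument is short.

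For $(2)\Rightarrow(3)$ I would simply recall, as already observed after \Cref{eg:cassio}, that every $\ast$-connecting path is in particular d-connecting; hence if no d-connecting $i$–$j$ path exists for a given $K$, then no $\ast$-connecting one does either, so $\dsepgiven{i}{j}{K}$ implies $\starsepgiven{i}{j}{K}$. In particular, if some $K\subseteq V\setminus\{i,j\}$ d-separates $i$ and $j$, the same $K$ $\ast$-separates them. For $(3)\Rightarrow(1)$ I would argue by contraposition: if $i$ and $j$ are adjacent, say $i\to j\in E$ (the other orientation being symmetric), then the single-edge path $(i,j)$ has no interior vertices, hence contains no collider and no non-collider. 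It is therefore d-connecting given every $K\subseteq V\setminus\{i,j\}$, and since it has at most one (in fact zero) collider it is $\ast$-connecting given every such $K$. Thus $i$ and $j$ fail to be $\ast$-separated by any conditioning set.

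For $(1)\Rightarrow(2)$ I would invoke the classical fact, cited in \Cref{sec:prelim} as \cite[Lemma 1]{verma_equivalence_2022}, that two non-adjacent nodes of a DAG are d-separated by some subset of $\pa(i)$ or of $\pa(j)$; a fortiori they are d-separated by some $K$. For a self-contained alternative one can take $K=\an(\{i,j\})$ and run the standard moralization argument on the induced subgraph $\cg_{\An(\{i,j\})}$: every interior vertex of a putative $i$–$j$ path lies in $K$ and hence blocks it, while $i$ and $j$ cannot themselves become adjacent in the moral graph, since an edge $i\to j$, an edge $j\to i$, or a common child inside $\An(\{i,j\})$ would each contradict acyclicity together with the assumed non-adjacency.

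I do not anticipate a genuine obstacle: $(1)\Rightarrow(2)$ is the only implication with content, and it is either a one-line citation or a textbook ancestral-graph computation, while the other two arrows are formal consequences of the definitions and of the inclusion $\globaldsep{\cg}\subseteq\globalstarsep{\cg}$. The lemma functions as bookkeeping, identifying the three notions ``non-adjacent'', ``d-separated by some set'', and ``$\ast$-separated by some set'' so that the skeleton-retrieval analysis of \Cref{alg:PCalgskel} can be carried out uniformly for $\dsep$ and $\starsep$.
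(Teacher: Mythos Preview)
Your proposal is correct and follows essentially the same route as the paper: the same cycle $(1)\Rightarrow(2)\Rightarrow(3)\Rightarrow(1)$, with $(1)\Rightarrow(2)$ cited from \cite[Lemma~1]{verma_equivalence_2022}, $(2)\Rightarrow(3)$ read off the definition, and $(3)\Rightarrow(1)$ by contraposition via the single-edge path. The only addition is your optional self-contained moralization argument for $(1)\Rightarrow(2)$, which the paper does not include.
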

\begin{proof}
$1 \implies 2$ is the \enquote{implies} direction of \cite[Lemma 1]{verma_equivalence_2022}, whereas $2 \implies 3$ follows directly from the definition of $\ast$-separation.
 $3 \implies 1$ is easily proven by contraposition: if $i$ and $j$ are adjacent, then the edge between them is a $*-$connecting path for any choice of $K \subset V \setminus {ij}$. 
\end{proof}
For the correctness of the skeleton retrieval step, in addition to \Cref{lem:nonadjsep} one has to show that, similarly to the d-separation setting, it suffices to consider statements in $\globalstarsep{\cg}$ where the size of $K$ is bounded in order to recover the correct skeleton. 
\begin{lemma} \label{lem:starsepgivenparents}
    Let $i$ and $j$ be non-adjacent nodes in a DAG $\cg$. Then one of the following hold: 
    \begin{center}
        $\starsepgiven{i}{j}{\pa(j)}$ \ \ \ or \ \ \ $\starsepgiven{i}{j}{\pa(i)}$.
    \end{center}
\end{lemma}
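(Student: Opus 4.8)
The plan is to reduce the statement to the corresponding fact for $\dsep$ and then invoke that d-separation implies $\ast$-separation for the same conditioning set. First I would note that, since $\cg$ is acyclic, it is impossible to have both $i \in \de(j)$ and $j \in \de(i)$; so, after possibly swapping the names of $i$ and $j$, I may assume $j \notin \de(i)$ and aim to prove $\starsepgiven{i}{j}{\pa(i)}$ (the other case gives $\starsepgiven{i}{j}{\pa(j)}$ by the symmetric argument).

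The core step is to show $\dsepgiven{i}{j}{\pa(i)}$. Suppose, for contradiction, that some path $\pi$ from $i$ to $j$ is d-connecting given $\pa(i)$, and look at the edge of $\pi$ incident to $i$. If it points into $i$, say $i \leftarrow x$, then $x \in \pa(i)$; since $i$ and $j$ are non-adjacent, $\pi$ has at least two edges, so $x$ is an interior node of $\pi$, and $x$ is a non-collider on $\pi$ because its edge toward $i$ points away from it. A non-collider lying in the conditioning set $\pa(i)$ blocks $\pi$, a contradiction. If instead the edge points out of $i$, say $i \to x$, follow $\pi$ away from $i$ along forward-pointing edges: either (a) every edge of $\pi$ points forward, giving a directed path $i \to \cdots \to j$ and hence $j \in \de(i)$, contradicting our assumption; or (b) we reach the first node $c$ at which an edge reverses, so that $c$ is a collider and there is a directed path $i \to \cdots \to c$. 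For $\pi$ to be d-connecting, $c \in \An(\pa(i))$, i.e. either $c \in \pa(i)$ or there is a directed path from $c$ to some $p \in \pa(i)$; in both cases, concatenating with $p \to i$ (or directly $c \to i$) and with the segment $i \to \cdots \to c$ produces a directed cycle in $\cg$, contradicting acyclicity. Hence no such $\pi$ exists and $\dsepgiven{i}{j}{\pa(i)}$ holds.

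Finally, since an $\ast$-connecting path is in particular a d-connecting path, $\dsepgiven{i}{j}{K}$ implies $\starsepgiven{i}{j}{K}$ for every $K$ (as already observed after \Cref{eg:cassio}, and recorded in \Cref{prop:markovinclusions}); applying this with $K = \pa(i)$ yields $\starsepgiven{i}{j}{\pa(i)}$, which finishes the proof. I do not expect a serious obstacle: the only delicate points are excluding the degenerate path $i \leftarrow j$ (handled by non-adjacency) and the acyclicity bookkeeping in the collider case (b). In fact one could shorten the argument by simply citing the d-separation statement \cite[Lemma 1]{verma_equivalence_2022} and passing to $\starsep$ in a single line; I included the direct argument above mainly to keep the section self-contained.
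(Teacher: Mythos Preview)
Your proof is correct. The paper's argument is very close to yours: it also fixes the WLOG assumption that there is no directed path from $i$ to $j$ and then shows that any candidate connecting path given $\pa(i)$ either is blocked at a parent of $i$ or forces a directed cycle via a collider that would have to lie in $\An(\pa(i))$. The one organizational difference is that the paper argues directly with $\ast$-connecting paths, splitting into the zero-collider case (directed paths or treks, blocked by $\pa(i)$) and the one-collider case (cycle contradiction), whereas you first prove the stronger statement $\dsepgiven{i}{j}{\pa(i)}$ via a first-edge case analysis and then invoke $\dsep \Rightarrow \starsep$. Your route has the advantage that, as you note, the d-separation fact is exactly \cite[Lemma~1]{verma_equivalence_2022}, so the lemma collapses to a one-line citation plus \Cref{prop:markovinclusions}; the paper's direct argument, by contrast, stays entirely inside the $\ast$-separation framework and never needs to mention d-separation.
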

\begin{proof} 
Due to the acyclicity of $\cg$, we may assume without loss of generality that there exists no directed path from $i$ to $j$. We claim that in this case, $\starsepgiven{i}{j}{\pa(i)}$ holds. We do this by showing that no undirected path between $i$ and $j$ is $*-$connecting given $\pa(i)$. Paths between $i$ and $j$ containing no colliders are either directed paths $i \leftarrow ... \leftarrow j$ or treks of the form $i \leftarrow ... \leftarrow \rightarrow ... \rightarrow j$. Both of these types of path are blocked by $\pa(i)$. Paths between $i$ and $j$ which contain exactly one collider and are not blocked by $\pa(i)$ are of the form $i \rightarrow ... \rightarrow k_1\rightarrow k \leftarrow k_2- ... - j$, where the portion of the path from $k_2$ to $j$ is either directed or a trek. For this path to be $*$-connecting given $\pa(i)$, the middle node of the collider $k$ would have to be an ancestor of a node in $\pa(i)$, and thus also of $i$. However, this would imply that $\cg$ contains the directed cycle $i \rightarrow ... \rightarrow k \rightarrow ... \rightarrow i$, contradicting the assumption that $\cg$ is acyclic.
\end{proof}
\begin{proposition} \label{prop:dstarskeleton}
The output of \Cref{alg:PCalgskel} does not change upon replacing $\globaldsep{\cg}$ with $\globalstarsep{\cg}$  as input.  
\end{proposition}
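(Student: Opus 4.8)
The plan is to prove the (formally stronger, but no harder) statement that \Cref{alg:PCalgskel} outputs the undirected skeleton $\skel(\cg)$ on \emph{both} inputs; since correctness for the input $\globaldsep{\cg}$ is already known \cite{geiger_d-separation_2013}, this reduces to re-running the standard correctness argument with $\globalstarsep{\cg}$ in place of $\globaldsep{\cg}$. As usual, the argument has two halves: (i) no edge of $\cg$ is ever deleted, and (ii) every non-edge of $\cg$ is eventually deleted. Half (i) is immediate from \Cref{lem:nonadjsep}: an adjacent pair $i,j$ is never $\ast$-separated, so $[i \indep j \mid K] \notin \globalstarsep{\cg}$ for every $K$, and the membership test on line 6 never succeeds on such a pair. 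In particular $\skel(\cg) \subseteq G$ holds at every stage of the execution, so $\pa_\cg(i)$ always sits inside the current neighbour set of $i$.

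For half (ii), let $i,j$ be non-adjacent in $\cg$. By \Cref{lem:starsepgivenparents} we may assume $\starsepgiven{i}{j}{\pa(i)}$, equivalently $[i \indep j \mid \pa_\cg(i)] \in \globalstarsep{\cg}$; note $\pa_\cg(i) \subseteq V \setminus \{i,j\}$ since $i,j$ are non-adjacent. By half (i) every edge $(i,k)$ with $k \in \pa_\cg(i)$ remains in $G$ throughout, so once the outer loop reaches $\ell = |\pa_\cg(i)|$ — which it does, as $\ell$ ranges at least up to the maximal in-degree of $\cg$ — the set $K = \pa_\cg(i)$ is among those examined for the pair $(i,j)$ (if the edge survives that long), the guard on line 5 is met via its ``$(i,k)\in G$ for all $k \in K$'' clause, and the CI test on line 6 fires, so $(i,j)$ is removed then unless it was removed at an earlier stage. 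Together, (i) and (ii) show the $\globalstarsep{\cg}$-run terminates with $G = \skel(\cg)$, which is exactly the output of the $\globaldsep{\cg}$-run. (The sets $Sepset(i,j)$ recorded along the way need not agree between the two runs; this does not affect the stated output of \Cref{alg:PCalgskel}, and its effect on the edge orientation step is addressed subsequently.)

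The one point I would watch most closely is that $\globalstarsep{\cg}$ properly contains $\globaldsep{\cg}$ in general (cf.\ \eqref{eq:dstarglobalmarkov}), so the extra CI statements could trigger a deletion ``earlier'' than in the d-separation run, changing the evolving neighbour sets that drive the loop on line 4 and the guard on line 5, and hence raising the worry that the two runs might diverge. The reason this cannot happen — and the crux of the proof — is that by half (i) the only edges ever deleted are non-edges of $\cg$; consequently the parent sets $\pa_\cg(i)$, which furnish the separating sets needed in half (ii) and which lie in $\skel(\cg) \subseteq G$, survive intact no matter in what order deletions occur. An earlier deletion can therefore only help, never hurt, and beyond this observation everything is the routine bookkeeping of the usual PC correctness proof, invoked now verbatim with $\starsep$.
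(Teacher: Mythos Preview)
Your proof is correct and follows essentially the same approach as the paper: both directions rest on \Cref{lem:nonadjsep} (adjacent pairs are never $\ast$-separated, so no true edge is cut) and \Cref{lem:starsepgivenparents} (non-adjacent pairs are $\ast$-separated by a parent set, so every non-edge is cut). Your write-up is more explicit than the paper's about why the separating set $\pa_\cg(i)$ actually gets examined by the loop---namely that half~(i) keeps $\skel(\cg)\subseteq G$ throughout so the guard on line~5 is met---and your closing remark that the recorded $Sepset(i,j)$ may differ between the two runs (without affecting the skeleton output) is a nice observation that the paper defers to \Cref{lem:starsepcolliders}.
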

\begin{proof}
Any two non-adjacent nodes $i,j$ in $G= \skel(\cg)$ are separated by a conditioning set of cardinality at most $\max\{|\pa(i)|, |\pa(j)|\}$ due to a combination of \Cref{lem:nonadjsep} and \Cref{lem:starsepgivenparents}. The fact that \Cref{lem:nonadjsep} is an equivalence means that no additional edges are cut when applying \Cref{alg:PCalgskel} to $\globalstarsep{\cg}$. Hence, the output is once again that of \Cref{alg:PCalgskel}, which is the undirected skeleton $G$.       
\end{proof}

Finally, we need to show that the edge orientation step behaves the same in both settings. 
\begin{lemma} \label{lem:starsepcolliders}
The edge orientation step of PC does not change upon replacing $\globaldsep{\cg}$ with $\globalstarsep{\cg}$ as input. 
\end{lemma}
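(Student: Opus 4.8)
The plan is to reduce everything to rule $\mathcal{R}1$ of \Cref{misc:orientationrules}, since the remaining rules $\mathcal{R}2$, $\mathcal{R}3$, $\mathcal{R}4$ never consult the oracle: they form a deterministic closure that operates only on the partially oriented graph obtained from the skeleton together with the colliders found by $\mathcal{R}1$. By \Cref{prop:dstarskeleton} the skeleton $G = \skel(\cg)$ is the same under both inputs, hence so is the collection of unshielded triples. Thus it suffices to show that for every unshielded triple $i - k - j$ of $G$, rule $\mathcal{R}1$ orients it as $i \rightarrow k \leftarrow j$ with input $\globalstarsep{\cg}$ if and only if it does so with input $\globaldsep{\cg}$.

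The subtlety is that the algorithm may record genuinely different separating sets $Sepset(i,j)$ in the two runs, so one cannot merely compare the sets that happen to be chosen; instead one argues that membership of the middle node $k$ is forced either way. Concretely, I would establish the following dichotomy for a non-adjacent pair $i,j$ with a common neighbor $k$: if $\{i,k,j\}$ is a collider in $\cg$ (that is, $i \rightarrow k \leftarrow j$), then $k$ lies in no set $K$ with $\starsepgiven{i}{j}{K}$ and in no set $K$ with $\dsepgiven{i}{j}{K}$; and if $\{i,k,j\}$ is a non-collider, then $k$ lies in every such $K$. Granting this, $k \notin Sepset(i,j)$ holds in the $\ast$-run precisely when $\{i,k,j\}$ is a collider in $\cg$, which is precisely when it holds in the d-run; so $\mathcal{R}1$ fires on exactly the same triples and produces exactly the same orientations.

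The dichotomy follows from the elementary observation that, since $k \in \neighbors(i) \cap \neighbors(j)$, the two-edge path $\pi = (i,k,j)$ is available, and it contains at most one collider, so it is $\ast$-connecting given $K$ if and only if it is d-connecting given $K$. In the collider case, $\pi$ has its unique collider at $k$ and no non-colliders, hence $\pi$ is d-connecting (and therefore $\ast$-connecting) given any $K$ with $k \in \An(K)$, in particular any $K$ containing $k$; so such a $K$ leaves $i$ and $j$ connected and cannot be a separating set. In the non-collider case, $\pi$ has no collider and its only intermediate node $k$ is a non-collider, hence $\pi$ is d-connecting (and therefore $\ast$-connecting) given any $K$ with $k \notin K$; so every separating set must contain $k$.

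Finally, I would observe that once $\mathcal{R}1$ has produced the same partially oriented graph in both runs, the subsequent application of $\mathcal{R}2$, $\mathcal{R}3$, $\mathcal{R}4$ — being determined entirely by the current edge orientations and the common skeleton, with no reference to $\globaldsep{\cg}$ or $\globalstarsep{\cg}$ — returns the same CPDAG. The one place the argument could go wrong, and hence the point that must be stated carefully, is precisely the non-uniqueness of $Sepset(i,j)$: the correct invariant is ``$k$ lies in, respectively is excluded from, \emph{all} separating sets of $i$ and $j$,'' not a comparison of the particular sets the algorithm happens to record. Beyond making this point cleanly, I do not expect a genuine obstacle.
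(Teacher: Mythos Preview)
Your proof is correct and in fact more careful than the paper's. The paper argues in one line that ``the triples oriented by $\mathcal{R}1$ depend entirely on conditional independence statements with $|K|=1$,'' and then invokes the fact that $\dsep$ and $\starsep$ coincide for $|K|\le 1$. Taken literally this is not justified: $Sepset(i,j)$ is a minimal-cardinality separating set and may well have size $\ge 2$, and for such sizes $\globalstarsep{\cg}$ can contain statements absent from $\globaldsep{\cg}$, so the two runs can record genuinely different $Sepset(i,j)$. Your argument sidesteps this issue by proving the robust invariant---for an unshielded triple $i-k-j$, the middle node $k$ lies in \emph{every} separating set (under either criterion) when the triple is a non-collider, and in \emph{no} separating set when it is a collider---using only that the two-edge path $(i,k,j)$ has at most one collider and hence is $\ast$-connecting iff it is d-connecting. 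This is the standard soundness argument for collider orientation in the d-separation setting, transported to $\starsep$; it makes the conclusion independent of which particular $Sepset(i,j)$ the algorithm happens to record, which is exactly the point the paper's shortcut glosses over. Both arguments then finish the same way, noting that $\mathcal{R}2$--$\mathcal{R}4$ do not consult the oracle.
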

\begin{proof}
The triples oriented by $\mathcal{R}1$  of \Cref{misc:orientationrules} depend entirely on conditional independence statements with $|K| =1$. Subsequent edge orientations according to the rules $\mathcal{R}2-\mathcal{R}4$ are not dependent on the statements taken as input. Thus, the claim follows from the fact that if $|K| \leq 1$, then  $\dsepgiven{i}{j}{K} \iff \starsepgiven{i}{j}{K}$ holds. 
\end{proof}
The proof of \Cref{prop:dstarsameoutput} now follows by combining \Cref{prop:dstarskeleton} and \Cref{lem:starsepcolliders}.

\section{The PCstar Algorithm for weighted MLBNs}
\label{sec:pcstar}
In this section we discuss the behavior of the PC algorithm when the input is a set of CI statements $\globalCsep{\cg}{C}$ arising from a weighted MLBN. In this setting, the additional CI statements from \eqref{def:Csepmarkov} lead to an output of \Cref{alg:PCalgskel} which may no longer be the undirected skeleton of $\cg$.
\begin{example} \label{eg:21diamondCsep}

Assume that the true DAG $(\cg, C)$ underlying a MLBN is a $21-$diamond  as depicted in \Cref{fig:21diamond} with coefficient matrix $C$ chosen such that $c_{24} < c_{21}c_{13}c_{34}$. The set $\globalCsep{\cg}{C}$ is
    \begin{align*}
        1 \indep 4 | \{3\} \ &, \ 1 \indep 4 | \{ 2,3 \} \\
        2 \indep 3 | \{1\} \ &,  \ 2 \indep 3 | \{ 1, 4 \} \\
        2 \indep 4 | \{1\} \ &, \  2 \indep 4 | \{3\} \  ,  \   2 \indep 4 | \{ 1 , 3 \}.  
    \end{align*}

Applying \Cref{alg:PCalgskel} to $\globalCsep{\cg}{C}$ outputs the following subgraph of $K_4$:

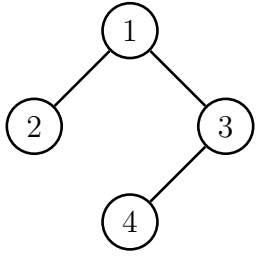
\begin{figure}[H]
    \centering
    \begin{tikzpicture}
    \begin{scope}[every node/.style={circle, draw}, node distance=1.8cm, line width=1pt]
        \node (1) {$1$};
        \node (2) [below left of=1] {$2$};
        \node (3) [below right of=1] {$3$};
        \node (4) [below right of=2] {$4$};
        
        \draw (1) -- (2);
        \draw (1) -- (3);
        \draw (3) -- (4);
    \end{scope}
\end{tikzpicture}

    \caption{The output of the skeleton retrieval step of PC on the $21$-diamond from \Cref{fig:21diamond} is the complete undirected graph with the edges $\{1,4\}$,$\{2,3\}$,$\{2,4\}$ cut. }
    \label{fig:21diamondskel}
\end{figure}

Unlike the $\dsep$ and $\starsep$ cases, this is a proper subgraph of the undirected skeleton of $\cg$.

\end{example}

To make sense of this phenomenon, we introduce the notion of the \emph{weighted transitive reduction} of a weighted DAG $(\cg,C)$. 

\subsection{Skeleton retrieval and the weighted transitive reduction}

\begin{definition} \label{def:wtr}
    Let $(\cg,C)$ be a weighted DAG on $n$ nodes. The \emph{weighted transitive reduction} of $(\cg, C)$ is the weighted DAG $(\cgtr, C^{tr})$ on $n$ nodes with weighted edges determined as follows: 
    \begin{align*}
        i \rightarrow j  \in \cgtr \ \text{with weight} \ c_{ij}  \ :\iff \ \text{\parbox[c][2.5em][c]{5.5cm}{\centering The edge $i \rightarrow j $ is the unique critical path  from $i$ to $j$ in $\cg$.}}
    \end{align*}
\end{definition}

Clearly, $\cgtr$ is a subgraph of $\cg$. Versions of it appear in \cite{gissibl_max-linear_2018} (under the name \emph{minimum max-linear DAG}) and in \cite{amendola_tropical_2024} where it is used in the combinatorial classification of polytropes. 
The results from this section will show that it is the sparsest subgraph of $\cg$ which is capable of preserving its global Markov property. 

\begin{definition} \label{def:genericC}
For a fixed DAG $\cg$ with edge set $E$, a matrix $C \in \mathbb{R}^E$ is \emph{generic} if it is such that for all $i,j \in V$, the critical $i-j$ path in $(\cg,C)$ is unique if it exists.
    
\end{definition}

The term generic is justified by the fact that the set of all matrices supported on a fixed DAG $\cg$  which give rise to multiple critical $i-j$ paths form a zero-measure set in  $\mathbb{R}_{>0}^E$. Recent work of Boege et al. \cite{boege_polyhedral_2025} associates each DAG $\cg$ to a polyhedral fan in the space of weight matrices $\mathbb{R}_{>0}^E$. In this fan, these zero-measure sets are precisely the faces of codimension $\geq 1$.
\begin{remark} \label{rmk:wtrminimal}
For generic $C$, $\cgtr$ is the unique minimal subgraph which has the same critical paths as $\cg$. This is because in any critical path $i \rightarrow \dots k_1 \rightarrow k_2  \dots \rightarrow j$ in $\cg$, each intermediate edge $k_1 \rightarrow k_2$ is the unique critical $k_1 - k_2$ path in $\cg$, and the property of being a critical path is transitive. 
\end{remark}
Two graphs with equal unique critical paths for each pair $(i,j)$ give rise to the same global Markov properties. For a two nodes $i,j$ in a weighted DAG $(\cg,C)$ with generic $C$, we denote the sequence of nodes $(i, k_1, k_2 , \dots , j)$ corresponding to the unique critical $i-j$ path by $\picrit{i}{j}(\cg,C)$.
\begin{lemma}\cite[Lemma 2.4]{boege_polyhedral_2025} \label{lem:critpathsmarkov}
        Let $(\cg, C)$ and $(\cg', C')$ be two weighted DAGs on $n$ nodes with generic weights $C$ and $C'$. Then 
    \[
        \globalCsep{\cg}{C} = \globalCsep{\cg'}{C'} \iff \picrit{i}{j}(\cg, C) = \picrit{i}{j}(\cg', C') \,\text{ for all $i \neq j$}.
    \]
\end{lemma}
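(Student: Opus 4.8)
The plan is to identify the combinatorial object that $\globalCsep{\cg}{C}$ actually records and match it against the critical-path data. The key reduction is the following. By \Cref{rmk:wtrminimal}, for generic $C$ the property of being a critical path is transitive and critical paths are unique, so a critical $i$-$j$ path in $\cg$ is exactly a directed $i$-$j$ path in $\cgtr$, and there is at most one such path; hence the family $\{\picrit{i}{j}(\cg,C)\}_{i\neq j}$ carries exactly the same information as the unweighted DAG $\cgtr$, and likewise $\{\picrit{i}{j}(\cg',C')\}_{i\neq j}$ carries the same information as $(\cg')^{tr}_{C'}$. Moreover, unwinding \Cref{def:wtr} together with the definition of the critical DAG, one checks that $i\to j\in\critdag{K}$ if and only if $\cgtr$ contains a directed $i$-$j$ path with no internal node in $K$; in particular $\critdag{\emptyset}$ is the transitive closure of $\cgtr$, and every $\critdag{K}$ is a subgraph of it. Thus each critical DAG $\critdag{K}$, and therefore the combinatorially defined set $\globalCsep{\cg}{C}$, is a function of $\cgtr$ alone. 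The ($\Leftarrow$) implication follows at once: if the critical-path data of the two weighted DAGs agree then $\cgtr = (\cg')^{tr}_{C'}$, so every critical DAG agrees with its counterpart and the two global Markov properties coincide.

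For ($\Rightarrow$) the work is to reconstruct $\cgtr$ from the CI statements in $\globalCsep{\cg}{C}$, which I would attempt in the style of the PC algorithm. First I would pin down the skeleton of $\cgtr$ via the claim that $i$ and $j$ are adjacent in $\cgtr$ precisely when they are $C^\ast$-connected given every $K\subseteq V\setminus\{i,j\}$: the forward direction is immediate, since an edge of $\cgtr$ has no internal node and so belongs to $\critdag{K}$ for all $K$, giving a connecting path of type (a); for the converse I would adapt \Cref{lem:starsepgivenparents} to the weighted setting, distinguishing whether or not $\cgtr$ contains a directed path between the two non-adjacent nodes. When it does not, one takes $K$ to be the set of $\cgtr$-ancestors of one endpoint, observes that $i$ and $j$ are then $\ast$-separated given $K$ in $\critdag{\emptyset}$, and uses that $\ast$-separation only strengthens on passing to the subgraph $\critdag{K}$. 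When it does, one conditions on the internal nodes of that directed path and rules out all five connecting-path shapes of \Cref{fig:5paths} in the resulting critical DAG, exploiting the rigidity that $\cgtr$ contains at most one directed path between any two nodes. With the skeleton in hand, I would recover the orientations by a collider-detection and orientation-propagation step modeled on that of PC but run inside the critical DAGs $\critdag{K}$, using the additional $C^\ast$-separation statements — those not visible to d-separation or $\ast$-separation — and again the one-directed-path-per-pair rigidity of $\cgtr$.

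I expect the ($\Rightarrow$) direction to be the main obstacle, and within it the converse half of the skeleton characterization together with the orientation step. The essential difference from the classical PC analysis is that $C^\ast$-separation is not d-separation, nor $\ast$-separation, with respect to any single fixed DAG: the underlying graph $\critdag{K}$ itself varies with $K$. Consequently the Markov-equivalence characterizations \Cref{thm:dsepmarkoveq} and \Cref{thm:dstarmarkovclass} cannot be applied off the shelf to the family $\{\critdag{K}\}_K$, and one must control precisely how edges of the critical DAG are lost as the conditioning set grows. It is exactly the special structure of the weighted transitive reduction — having no two directed paths with the same pair of endpoints — that supplies enough rigidity for $\cgtr$, and hence the critical-path data, to be reconstructed from the conditional independence statements.
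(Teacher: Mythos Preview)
The paper does not actually prove this lemma: it is quoted verbatim from \cite{boege_polyhedral_2025} and used as a black box (only the easy direction is ever invoked, in \Cref{cor:wtrsparsest} and \Cref{eg:diamonds}). So there is no ``paper's own proof'' to compare against; what follows concerns the internal soundness of your plan.

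Your argument for ($\Leftarrow$) is correct and complete: equal critical-path data forces $\cgtr=(\cg')^{tr}_{C'}$, and you have correctly observed that every critical DAG $\critdag{K}$, hence $\globalCsep{\cg}{C}$, is determined by $\cgtr$ alone. This is exactly the content behind \Cref{cor:wtrsparsest}.

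Your plan for ($\Rightarrow$), however, has a genuine gap at the orientation step, and it is not one that can be patched by working harder. Take $\cg=1\!\to\!2\!\to\!3$ and $\cg'=3\!\to\!2\!\to\!1$ with any generic weights: both are already their own weighted transitive reductions, and a direct check gives $\globalCsep{\cg}{C}=\globalCsep{\cg'}{C'}=\{[1\indep 3\mid 2]\}$, yet $\picrit{1}{3}(\cg,C)=(1,2,3)$ while $\picrit{1}{3}(\cg',C')$ does not exist. More generally, the paper itself observes in \Cref{rmk:srcneighbor} and \Cref{eg:hexagons} that distinct orientations of the same skeleton---hence distinct $\cgtr$'s---can have identical $C^\ast$-Markov properties. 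So no procedure can reconstruct all of $\cgtr$, and in particular all directed critical paths, from $\globalCsep{\cg}{C}$ alone; your proposed ``collider-detection and orientation-propagation step'' is therefore bound to leave some edges undirected. This means either the lemma as transcribed here is slightly stronger than what \cite{boege_polyhedral_2025} actually proves, or $\picrit{i}{j}$ is intended to be read symmetrically in $i,j$ (as an undirected path). In either case your skeleton argument survives, but the orientation part of your strategy cannot be completed as written; you should consult the original source to see precisely what is claimed before investing further effort in that direction.
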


\begin{corollary} \label{cor:wtrsparsest}
For generic $(\cg, C)$, $\cgtr$ is the unique sparsest subgraph of $\cg$ capable of encoding $\globalCsep{\cg}{C}$, in the sense that it fulfills the following condition:
\begin{align} \label{eq:encodecon}
    \text{There exists a $C'$ supported on $\cgtr$ such that $\globalCsep{\cg}{C} = \globalCsep{\cgtr}{C'}$.} 
\end{align}
\end{corollary}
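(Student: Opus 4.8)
\emph{Proof proposal.} The plan is to establish two facts: first that $\cgtr$ itself satisfies condition \eqref{eq:encodecon}, and second that \emph{every} subgraph $\cg'$ of $\cg$ satisfying \eqref{eq:encodecon} must contain $\cgtr$ as a subgraph. Together these give both the sparsity and the uniqueness claims: $\cgtr$ has at most as many edges as any admissible $\cg'$, and if some admissible $\cg'$ has exactly $|E(\cgtr)|$ edges, then $E(\cgtr)\subseteq E(\cg')$ forces $\cg'=\cgtr$ (both being subgraphs of $\cg$ on node set $V$).

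For the first fact I would take $C' = C^{tr}$, the restriction of $C$ to the edges of $\cgtr$. By \Cref{rmk:wtrminimal} and \Cref{def:wtr}, $(\cgtr, C^{tr})$ has exactly the same critical $i$–$j$ paths as $(\cg, C)$ for every pair $i\neq j$; in particular these paths are unique, so $C^{tr}$ is generic and $\picrit{i}{j}(\cgtr, C^{tr}) = \picrit{i}{j}(\cg, C)$ for all $i\neq j$. \Cref{lem:critpathsmarkov} then yields $\globalCsep{\cgtr}{C^{tr}} = \globalCsep{\cg}{C}$, which is precisely \eqref{eq:encodecon}.

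For the second fact, let $\cg'$ be a subgraph of $\cg$ and $C'$ any weight matrix supported on $\cg'$ with $\globalCsep{\cg'}{C'} = \globalCsep{\cg}{C}$, and fix an edge $i\to j\in E(\cgtr)$. Since this edge is the unique critical $i$–$j$ path in $(\cg,C)$, it belongs to the critical DAG $\critdag{K}$ for \emph{every} $K\subseteq V\setminus\{i,j\}$ (it has positive weight, it is critical, and a single edge cannot factor through any such $K$), so $i$ and $j$ are $C^\ast$-connected given every $K$ via a path of type \Cref{fig:5paths}(a). Hence no statement $[i\indep j\mid K]$ lies in $\globalCsep{\cg}{C} = \globalCsep{\cg'}{C'}$, i.e. $i$ and $j$ are never $C^\ast$-separated in $(\cg', C')$. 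On the other hand, if $i\to j\notin E(\cg')$, then — because $\cg'\subseteq\cg$ and $\cg$ is acyclic with $i\to j\in E$ — the nodes $i,j$ are non-adjacent in $\cg'$ and $\cg'$ contains no directed path from $j$ to $i$; the $C^\ast$-separation analogue of \Cref{lem:starsepgivenparents} would then give $[j\Csep i\mid \pa_{\cg'}(j)]$ in $(\cg', C')$, contradicting the previous sentence. Therefore $i\to j\in E(\cg')$, and so $E(\cgtr)\subseteq E(\cg')$, as required.

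The one ingredient not already in hand is the $C^\ast$-separation analogue of \Cref{lem:starsepgivenparents} invoked above: in any weighted DAG $(\cg',C')$, if there is no directed path from $j$ to $i$, then $[j\Csep i\mid \pa_{\cg'}(j)]$ holds. I expect this to be the main (though not deep) piece of work, proved exactly as \Cref{lem:starsepgivenparents}: one checks that none of the five path shapes of \Cref{fig:5paths} between $j$ and $i$ can be realized in $\critdag{\pa_{\cg'}(j)}$, using only that any directed path of length $\ge 2$ ending at $j$ necessarily factors through $\pa_{\cg'}(j)$ together with acyclicity (note that any directed path from $i$ into $j$ here has length $\ge 2$, since $i\to j\notin E(\cg')$). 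This argument concerns only the existence of directed paths and whether they pass through the conditioning set, so it goes through verbatim for non-generic $C'$, which is what lets the uniqueness statement quantify over arbitrary admissible $C'$ rather than merely generic ones. If this parents-bound lemma — or equivalently the fact that two nodes are $C^\ast$-separable if and only if they are non-adjacent in $\cgtr$ — is established earlier in this subsection, then \Cref{cor:wtrsparsest} follows immediately from it together with \Cref{lem:critpathsmarkov} and \Cref{rmk:wtrminimal}.
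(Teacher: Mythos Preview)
Your proposal is correct. The existence half (taking $C'=C^{tr}$ and invoking \Cref{rmk:wtrminimal} together with \Cref{lem:critpathsmarkov}) is exactly what the paper does. For the minimality half the paper is much terser: it simply observes that deleting any edge of $\cgtr$ changes the critical-path data and then appeals (implicitly) to \Cref{lem:critpathsmarkov} and \Cref{rmk:wtrminimal} again to conclude that no strictly sparser subgraph of $\cg$ can reproduce $\globalCsep{\cg}{C}$. Your route is different in that you prove the stronger containment $E(\cgtr)\subseteq E(\cg')$ directly from separation statements, via a $C^\ast$-version of \Cref{lem:starsepgivenparents}. That buys you something the paper's one-line argument does not: \Cref{lem:critpathsmarkov} as stated requires \emph{generic} weights on both sides, so the paper's appeal to it tacitly restricts to generic $C'$, whereas your parents-blocking argument goes through for arbitrary $C'$ supported on $\cg'$ (every directed path into $j$ of length $\ge 2$ factors through $\pa_{\cg'}(j)$ regardless of ties among path weights). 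The cost is that you need the extra lemma you flag; note that the paper does essentially prove this ingredient, but only \emph{after} the corollary, as \Cref{lem:Csepgivenparents} combined with \Cref{lem:starsepgivenparents}, so if you want a self-contained proof at this point you would indeed have to supply the short case analysis over the five path types that you sketch.
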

\begin{proof}
    The equality $\globalCsep{\cg}{C} = \globalCsep{\cgtr}{C^{tr}}$ follows from the previous lemma and \Cref{rmk:wtrminimal}. It is evident from \Cref{def:wtr} that deleting any edge of $\cgtr$ would change its critical paths, thus implying that it is the sparsest subgraph of $\cg$ fulfilling \eqref{eq:encodecon}.
\end{proof}

The following result characterizes the behavior of the skeleton retrieval step of PC in the weighted setting.
\begin{lemma} \label{lem:Csepgivenparents}
    Let $(\cg, C)$ be a weighted DAG. For $i,j \in V$ it holds that 
    \begin{align*}
        \Csepgiven{i}{j}{\pa(j)} \ \text{in $(\cg, C)$} \iff \starsepgiven{i}{j}{\pa(j)} \ \text{in $\cgtr$}.
    \end{align*}
\end{lemma}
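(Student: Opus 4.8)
The plan is to unwind both sides of the claimed equivalence in terms of the critical DAGs and the five path-types (a)–(e) in \Cref{fig:5paths}, and match them up. First I would observe that the condition $\Csepgiven{i}{j}{\pa(j)}$ is, by definition, the statement that $\critdag{\pa(j)}$ contains no path from $i$ to $j$ of types (a)–(e), where $\critdag{\pa(j)} = \cg^\ast_{\pa(j)}(C)$. Likewise $\starsepgiven{i}{j}{\pa(j)}$ in $\cgtr$ is the statement that $\cgtr$ has no $\ast$-connecting path from $i$ to $j$ given $\pa(j)$, i.e.\ no d-connecting path with at most one collider. So the real content is a comparison of two unweighted DAGs on $V$: the critical DAG $\cg^\ast_{\pa(j)}(C)$ and $\cgtr$, together with the conditioning set $\pa(j)$. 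The key structural fact I would extract is that $\cg^\ast_{\pa(j)}(C)$ and $\cgtr$ agree on exactly those edges relevant to $i$–$j$ separation: an edge $a \to b$ lies in $\cgtr$ iff $a\to b$ is the unique critical path from $a$ to $b$ in $(\cg,C)$, while $a\to b$ lies in $\cg^\ast_{\pa(j)}(C)$ iff $c^\ast_{ab}>0$ and no critical directed path from $a$ to $b$ factors through $\pa(j)$. The edges of $\cg^\ast_{\pa(j)}(C)$ are therefore (possibly) \emph{more} than those of $\cgtr$ — they include "shortcut" edges $a\to b$ whose critical realizing path in $\cg$ has length $\geq 2$ but avoids $\pa(j)$.

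Next I would argue that these extra shortcut edges cannot create a new $\ast$-connecting path in $\cg^\ast_{\pa(j)}(C)$ that was absent from $\cgtr$. The point is that each shortcut edge $a\to b$ of $\cg^\ast_{\pa(j)}(C)$ is witnessed by a genuine critical directed path $a\to k_1\to\dots\to b$ in $\cg$ all of whose intermediate nodes lie outside $\pa(j)$ (by definition of factoring through $\pa(j)$), and each of whose edges, by transitivity of criticality and genericity (\Cref{rmk:wtrminimal}), is an edge of $\cgtr$. So every shortcut edge of $\cg^\ast_{\pa(j)}(C)$ corresponds to a directed path in $\cgtr$ through non-conditioned nodes; expanding all shortcut edges of a type-(a)–(e) path in $\cg^\ast_{\pa(j)}(C)$ produces a path in $\cgtr$ which is still d-connecting given $\pa(j)$ (the inserted internal nodes are non-colliders outside $\pa(j)$, hence do not block) and still has at most one collider (expansion inserts only directed segments, not new colliders). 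This gives one direction: an $\ast$-connecting $i$–$j$ path in $\cg^\ast_{\pa(j)}(C)$ yields one in $\cgtr$, so $\starsepgiven{i}{j}{\pa(j)}$ in $\cgtr$ implies $\Csepgiven{i}{j}{\pa(j)}$ in $(\cg,C)$. For the converse, I would go the other way: a $\ast$-connecting path $\pi$ in $\cgtr$ given $\pa(j)$ is in particular a path in $\cg$; I need to produce a type-(a)–(e) path in $\cg^\ast_{\pa(j)}(C)$. Here I would use that $\cgtr$-edges are critical length-one paths, so each edge of $\pi$ persists in $\cg^\ast_{\pa(j)}(C)$ unless its critical path now factors through $\pa(j)$ — but an edge of $\pi$ cannot have an endpoint or internal detour through $\pa(j)$ without contradicting that $\pi$ is d-connecting given $\pa(j)$ (a non-collider on $\pi$ is not in $\pa(j)$), so every edge of $\pi$ survives, giving the desired connecting path directly.

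The step I expect to be the main obstacle is the careful bookkeeping in the "expansion" argument: verifying that when a type-(a)–(e) path in $\cg^\ast_{\pa(j)}(C)$ has several shortcut edges expanded simultaneously into $\cgtr$-paths, the result is still a \emph{simple} path (no repeated nodes), still d-connecting given $\pa(j)$, and still has at most one collider — in particular that the unique collider's ancestor condition is preserved and that no new collider is manufactured at a splice point. One has to check that the critical paths witnessing distinct shortcut edges are internally disjoint (or can be chosen so, using genericity and the transitivity/uniqueness of critical paths), and that a splice point between a shortcut expansion and the rest of the path is always a non-collider lying outside $\pa(j)$. I would handle this by treating the collider (if present) as the single "special" vertex: shortcut edges incident to or on the same side as the collider are expanded into directed segments that keep the collider as the unique collider, and all other expansions are on directed portions of the path where splice vertices are trivially non-colliders outside $\pa(j)$ by the factoring condition. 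Once this is set up, both implications fall out and the lemma follows.
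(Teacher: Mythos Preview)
Your approach is genuinely different from the paper's: you work path-by-path, comparing $\critdag{\pa(j)}$ and $\cgtr$ directly, whereas the paper dispatches the backward direction ($\impliedby$) in two lines by invoking \Cref{prop:markovinclusions} (so that $\ast$-separation in $\cgtr$ implies $C^\ast$-separation in $(\cgtr,C^{tr})$) together with \Cref{lem:critpathsmarkov} (so that $(\cgtr,C^{tr})$ and $(\cg,C)$ have identical $C^\ast$ global Markov properties). Your expansion argument for that direction is correct in spirit but far more laborious than needed; the simplicity/disjointness bookkeeping you flag as the main obstacle is entirely avoided by the paper's black-box route.

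There is, however, a genuine gap in your forward direction. You argue that a $\ast$-connecting path $\pi$ in $\cgtr$ survives edge-by-edge into $\critdag{\pa(j)}$ (true, since each $\cgtr$-edge is its own unique critical path and hence never factors through any $K$), and then conclude that this surviving path is ``the desired connecting path directly.'' But the five types (a)--(e) of \Cref{fig:5paths} are \emph{bounded-length} patterns in $\critdag{K}$ with at most four edges; a long surviving path is not of type (a)--(e), so mere survival does not establish $C^\ast$-connection. You need a \emph{contraction} step, not a survival step. What actually makes the lemma work is that the conditioning set $\pa(j)$ is very special: any $\ast$-connecting $i$--$j$ path in $\cgtr$ given $\pa(j)$ must be a directed path $j\to\cdots\to i$, because an edge into $j$ would place the penultimate non-collider of $\pi$ in $\pa_{\cgtr}(j)\subseteq\pa(j)$, and a collider on $\pi$ would have to lie in $\de(j)\cap\An(\pa(j))\subseteq\de(j)\cap\an(j)=\emptyset$. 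Such a directed path then contracts to the single type-(a) edge $j\to i$ in $\critdag{\pa(j)}$. The paper bypasses this analysis altogether: assuming a directed $i\to j$ path, it observes that $C^\ast$-separation forces $i\to j\notin\critdag{\pa(j)}$, hence the critical $i\to j$ path has length $\geq 2$, hence $i$ and $j$ are non-adjacent in $\cgtr$, and then appeals to the argument of \Cref{lem:starsepgivenparents} (the paper cites \Cref{lem:nonadjsep}, but it is really the parent-set separation of \Cref{lem:starsepgivenparents} that is being used). Without either this reduction or an explicit contraction, your forward direction does not close.
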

\begin{proof}
We assume that there exists a directed path from $i$ to $j$ (In all other cases the equivalence is trivial). \\
\enquote{$\implies$}: If $\Csepgiven{i}{j}{\pa(j)}$ holds in $\cg$ then in particular $i \rightarrow j$ is not an edge of $\critdag{\pa(j)}$ (There is no $*$-connecting path of type (a) in this critical DAG). Thus, there exists a critical path from $i$ to $j$ which factors through $\pa(j)$. This implies $i \rightarrow j \notin \cgtr$, and therefore (by \Cref{lem:nonadjsep}) $i$ and $j$ are $*-$ separated by $\pa(j)$ in $\cgtr$. \\ 
\enquote{$\impliedby$}: If $\starsepgiven{i}{j}{\pa(j)}$ holds in $\cgtr$, then this implies $\Csepgiven{i}{j}{\pa(j)}$ holds in $\cgtr$ by \Cref{prop:markovinclusions}. Due to \Cref{lem:critpathsmarkov}, the statement also holds in $(\cg,C)$. 
\end{proof}

\begin{theorem} \label{thmPCwtr}

Let $(\cg, C)$ be a weighted DAG. Applying \Cref{alg:PCalgskel} to $\globalCsep{\cg}{C}$ retrieves $\skel(\cgtr)$.

\end{theorem}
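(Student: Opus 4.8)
The plan is to establish the two inclusions $\skel(\cgtr) \subseteq G_{\mathrm{out}}$ and $G_{\mathrm{out}} \subseteq \skel(\cgtr)$, where $G_{\mathrm{out}}$ is the graph output by \Cref{alg:PCalgskel} on input $\globalCsep{\cg}{C}$. Since \Cref{alg:PCalgskel} starts from the complete graph on $V$ and only ever deletes edges, this reduces to two claims: (A) no edge of $\skel(\cgtr)$ is ever deleted during the run, and (B) every pair $\{i,j\}$ that is \emph{not} an edge of $\skel(\cgtr)$ is deleted at some point. Claim (A) is immediate: if $i \to j \in \cgtr$ then by \Cref{def:wtr} the single edge $i \to j$ is the unique critical path from $i$ to $j$ in $(\cg,C)$; since this path has no intermediate vertex, no critical path from $i$ to $j$ factors through any $K \subseteq V \setminus \{i,j\}$, so $i \to j$ is an edge of the critical DAG $\critdag{K}$, i.e. a $\ast$-connecting path of type (a) in \Cref{fig:5paths}. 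Hence $i$ and $j$ are $C^\ast$-connected given $K$ and $[i \indep j | K] \notin \globalCsep{\cg}{C}$. Thus the conditional-independence test on line 6 of \Cref{alg:PCalgskel} never succeeds for this pair, the edge is never removed, and in particular $\neighbors_{\cgtr}(v) \subseteq \neighbors_G(v)$ holds for every vertex $v$ throughout the run---a fact that will be needed for (B).

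For (B), suppose $i$ and $j$ are non-adjacent in $\cgtr$. Applying \Cref{lem:starsepgivenparents} to the DAG $\cgtr$ gives a set $K \in \{\pa_{\cgtr}(i), \pa_{\cgtr}(j)\}$ with $\starsepgiven{i}{j}{K}$ in $\cgtr$; say $K = \pa_{\cgtr}(j)$, and note $i,j \notin K$ (in particular $i \notin K$, as $i \to j \notin \cgtr$). This $\ast$-separation upgrades to $[i \indep j | K] \in \globalCsep{\cg}{C}$: apply \Cref{prop:markovinclusions} to the weighted DAG $(\cgtr, C^{tr})$ to obtain $\Csepgiven{i}{j}{K}$ in $(\cgtr, C^{tr})$, then invoke the identity $\globalCsep{\cgtr}{C^{tr}} = \globalCsep{\cg}{C}$ of \Cref{cor:wtrsparsest} (alternatively, this conclusion is exactly \Cref{lem:Csepgivenparents}). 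It remains to check that \Cref{alg:PCalgskel} actually tests this $K$. By (A), $K = \pa_{\cgtr}(j) \subseteq \neighbors_{\cgtr}(j) \subseteq \neighbors_G(j)$ at every stage, so in the iteration with $\ell = |K|$---unless $\{i,j\}$ was already cut at a smaller $\ell$---the set $K$ is among the subsets examined for $\{i,j\}$ on line 4, the neighbourhood guard on line 5 holds via its \enquote{$(j,k) \in G$ for all $k \in K$} branch, the membership test on line 6 succeeds, and the edge is deleted. The outer loop ranges far enough (as in the classical PC correctness proof), since $|K| \le \indeg_\cg(j) \le |V| - 1$. Combining (A) and (B) gives $G_{\mathrm{out}} = \skel(\cgtr)$.

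The step I expect to require the most care is not any single implication above---each is immediate from a definition or a one-line consequence of an earlier result---but the coupling of (A) and (B): to conclude that the witnessing set $K$ of a non-$\cgtr$-pair is genuinely tested, one needs $K$ to still lie in the \emph{pruned} neighbourhoods at the relevant stage, which rests on $\cgtr$-edges never having been removed. I would make this airtight by casting the whole argument as an induction on the outer loop counter $\ell$, with invariant: at the start of iteration $\ell$, one has $\skel(\cgtr) \subseteq G$ and every non-$\cgtr$-pair that is $C^\ast$-separable by a conditioning set of size $< \ell$ has already been deleted. One further point to flag is that \Cref{cor:wtrsparsest} is stated only for generic $C$; to obtain the theorem without that hypothesis, route (B) through \Cref{lem:Csepgivenparents}, whose statement carries no genericity assumption.
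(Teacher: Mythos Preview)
Your proof is correct and follows essentially the same approach as the paper: both arguments establish the equivalence ``$\{i,j\}$ is an edge of $\skel(\cgtr)$ if and only if $i$ and $j$ are $C^\ast$-connected for every $K$'', and both locate a witnessing separator among the parents of one endpoint via \Cref{lem:starsepgivenparents}/\Cref{lem:Csepgivenparents}. If anything, you are more careful than the paper in checking that the witnessing set $K=\pa_{\cgtr}(j)$ actually survives the neighbour restriction on line~5 of \Cref{alg:PCalgskel}, a point the paper leaves implicit; one small caveat is that your parenthetical ``this conclusion is exactly \Cref{lem:Csepgivenparents}'' is slightly off, since that lemma is stated with $\pa_\cg(j)$ rather than $\pa_{\cgtr}(j)$---your primary route through \Cref{prop:markovinclusions} and \Cref{cor:wtrsparsest}, together with your flag on the genericity hypothesis, is the sound one.
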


\begin{proof}

Let $i, j \in V$ and assume without loss of generality that no directed path from $j$ to $i$ exists. Using \Cref{lem:nonadjsep} and the definition of the global Markov property, \Cref{lem:Csepgivenparents} can be reformulated as follows: 
\begin{align*}
    i \ \text{and} \ j \ \text{are non-adjacent in}  \ \cgtr \iff& \ \Csepgiven{i}{j}{\pa(j)} \ \text{holds in } \ (\cg,C) \\
    \iff& \ \CI{i,j|K} \ \in \globalCsep{\cg}{C} 
    \end{align*} 
Therefore, the procedure of starting with the complete undirected graph and removing edges whenever two nodes are $C^\ast$-separated recovers $\skel(\cgtr)$. Furthermore (similarly to the d- and unweighted $\ast-$-separation cases) it suffices to query only the separation statements with conditioning set of cardinality up to $\max_{v \in V}\indeg(v)$ due to \Cref{lem:Csepgivenparents}.
\end{proof}

\subsection{Collider orientation}

Applying \Cref{alg:PCalgskel} to $\globalCsep{\cg}{C}$ yields the skeleton of a well-defined subgraph of $(\cg,C)$ which contains all of its critical paths. As $G^{tr} := \skel(\cgtr)$ is generally a proper subgraph of the skeleton of $\cg$, it may not contain all of its unshielded triples. Nevertheless, given $\globalCsep{\cg}{C}$ one may expect to be able to orient some of the edges of $G^{tr}$ using the rules in \Cref{misc:orientationrules}, starting with $\mathcal{R}1$. The following example illustrates that a direct application of $\mathcal{R}1$ can lead to an incorrect output.

\begin{example} \label{eg:diamond}

 Consider the weighted DAG $(\cg, C)$ depicted below
\begin{figure}[H]
\centering
\begin{tikzpicture}
    \begin{scope}[->,every node/.style={circle,draw},line width=1pt, node distance=1.8cm, every path/.style={thick, -{Stealth[length=5pt, width=5pt, scale=1]}}]
    \node (1) {$1$};
    \node (2) [below left of=1] {$2$};
    \foreach \from/\to in {1/2}
    \draw (\from) -- (\to);
    \path[every node/.style={font=\sffamily\small}]
    (1) -- (2) node [near start, left] {$c_{12}$};
    \node (3) [below right of=1] {$3$};
    \foreach \from/\to in {1/3}
    \draw (\from) -- (\to);
    \path[every node/.style={font=\small}]
    (1) -- (3) node [near start, right] {$c_{13}$};
    \node (4) [below right of=2] {$4$};
    \foreach \from/\to in {2/4,3/4}
    \draw (\from) -- (\to);
    \path[every node/.style={font=\small}]
    (2) -- (4) node [near end, left] {$c_{24}$};
    \path[every node/.style={font=\small}]
    (3) -- (4) node [near end, right] {$c_{34}$};
    \end{scope}
\end{tikzpicture}
\vspace{-1cm}
\label{fig:diamond}
\end{figure}
with edges weighted such that the path $1-3-4$ is the unique critical $1-4$ path (i.e. $c_{12}c_{24} < c_{13}c_{34}$). The global Markov property is
\begin{align*}
    \globalCsep{\cg}{C} = \bigl\{ [1 \indep 4  \ | 3] \ , \  [1 \indep 4  \ | 23] \ , \ [2 \indep 3 \ | 1] \bigr\}.
\end{align*}

Applying the skeleton retrieval of PC to this set cuts the edges $\{1,4\}$ and $\{ 2,3 \}$ from the complete graph on 4 nodes, yielding precisely $G = \skel(\cg)$. This is consistent with \Cref{thmPCwtr}, as $\cg = \cgtr$ holds for this DAG. However, the subsequent orientation according to the rules in \Cref{misc:orientationrules} leads to an incorrect output. For the pair $(1,4)$, \Cref{alg:PCalgskel} sets $Sepset(1,4) = \{ 3\}$, and applying
$\mathcal{R}1$ to the unshielded triple $1 - 2 - 4$ would orient it as $1 \rightarrow 2 \leftarrow 4$ due to the fact that  $2 \notin Sepset(1,4)$.   
\end{example}

 From the point of view of graph separation, the \enquote{error} in the previous example occurs because the critical DAG $\cg^\ast_{\{2\}}$ contains the edge $1 \rightarrow 4$, which is a $\ast$-connecting path of type (a). However, $1$ and $4$ are not $\ast$-connected in $\cg^*_{\{2,3\}}$ and indeed one has that $\CI{1,4|23} \in \globalCsep{\cg}{C}$. So while the node $2$ does not lie in the minimal separating set for $1$ and $4$, $2$ does lie in \emph{some} separating set. This suggests that for collider orientation it may be necessary to look beyond the first separating set encountered. The following proposition states that non-minimal separating sets of a certain form provide sufficient information to orient all colliders.
 
\begin{proposition} \label{prop:findcoll}
Let $(\cg, C)$ be a weighted DAG and $\{i, k,j \}$ be an unshielded triple in $\skel(\cgtr)$. Then the following holds: 

\begin{center}
    $i \rightarrow k \leftarrow j$ is a collider in $\cgtr$ $\iff \CI{i,j| k \cup \pa(j) \cup \pa(i)} \notin \globalCsep{\cg}{C}$.  
\end{center}
\end{proposition}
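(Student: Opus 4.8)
The plan is to recast both sides of the equivalence in terms of the critical DAG $\critdag{S}$, where $S := \{k\}\cup\pa(i)\cup\pa(j)$, and then treat the two implications separately. By \Cref{rmk:wtrminimal} and \Cref{cor:wtrsparsest}, $(\cg,C)$ and $(\cgtr,C^{tr})$ have the same critical paths, hence the same critical DAGs $\critdag{K}$ and the same global Markov property; moreover every edge lying on a critical path of $\cg$ is an edge of $\cgtr$ (it is the unique critical path between its endpoints). So all the computations can be carried out inside $\cgtr$, and $\CI{i,j|S}\in\globalCsep{\cg}{C}$ holds if and only if there is no $\ast$-connecting path of one of the types (a)--(e) from $i$ to $j$ in $\critdag{S}$. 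I assume throughout that $C$ is generic in the sense of \Cref{def:genericC}, so that critical paths are unique.

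The key fact I would establish first is that \emph{every in-neighbour of $i$ in $\critdag{S}$ lies in $\pa_{\cgtr}(i)\subseteq S$}, and symmetrically for $j$. Indeed, if $p\to i$ is an edge of $\critdag{S}$ then the critical $p$--$i$ path $\pi$ does not factor through $S$; but the last edge $v\to i$ of $\pi$ is an edge of $\cgtr$, so $v\in\pa_{\cgtr}(i)\subseteq S$, whence $v$ cannot be an internal vertex of $\pi$ and therefore $v=p$. Applied to the last edge of a critical $i$--$j$ path, the same reasoning shows $i\to j,\ j\to i\notin\critdag{S}$, since otherwise $i$ and $j$ would be adjacent in $\cgtr$, contradicting that $\{i,k,j\}$ is unshielded in $\skel(\cgtr)$. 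With this, the forward implication is immediate: if $i\to k\leftarrow j$ is a collider in $\cgtr$, then $i\to k$ and $j\to k$ are edges of $\cgtr$ and hence their own unique critical paths; neither factors through $S$ (they have no internal vertices), so both are edges of $\critdag{S}$. Since $k\in S$, the path $i\to k\leftarrow j$ is a $\ast$-connecting path of type (c) in $\critdag{S}$, so $i$ and $j$ are $C^\ast$-connected given $S$ and $\CI{i,j|S}\notin\globalCsep{\cg}{C}$.

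For the converse I argue by contraposition. If $\{i,k,j\}$ is not a collider in $\cgtr$, then, being unshielded, it has one of the shapes $i\to k\to j$, $i\leftarrow k\to j$, or $i\leftarrow k\leftarrow j$; in each case $k$ is a parent of $i$ or of $j$. I then rule out all five $\ast$-connecting path types in $\critdag{S}$. Type (a) is excluded by the observation above. In types (b), (d), and (e), one of $i$ or $j$ has an incoming edge from a non-collider vertex required to lie outside $S$, which is impossible by the key fact. For type (c), the collider $\ell$ must lie in $S$: if $\ell\in\pa(i)$ then $\ell\to i$ is an edge of $\cg$, so acyclicity forbids a directed $i$--$\ell$ path and hence $i\to\ell\notin\critdag{S}$; symmetrically if $\ell\in\pa(j)$; and if $\ell=k$, then since $k$ points into $i$ or into $j$, acyclicity rules out one of the two required edges $i\to k,\ j\to k$ of $\critdag{S}$. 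Hence no $\ast$-connecting path from $i$ to $j$ survives in $\critdag{S}$, so $\Csepgiven{i}{j}{S}$ holds and $\CI{i,j|S}\in\globalCsep{\cg}{C}$.

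The main obstacle is the contrapositive direction: one must exclude the collider-type paths (c)--(e) systematically, using both that $S$ absorbs every parent of $i$ and of $j$ (which controls the neighbourhoods of $i$ and $j$ in $\critdag{S}$) and that it contains $k$ (which disposes of paths through the middle vertex of the triple), all while keeping the three notions of edge---in $\cg$, in $\cgtr$, and in $\critdag{S}$---carefully apart.
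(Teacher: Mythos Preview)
Your proof is correct and follows essentially the same route as the paper's: establish the forward implication via the type~(c) path $i\to k\leftarrow j$ in $\critdag{S}$, then for the contrapositive rule out all five connecting-path types in $\critdag{S}$ using that $S$ contains $k$ and the parents of $i$ and $j$. Your argument is in fact more careful than the paper's in two places: you explicitly dispose of type~(c) paths through $\ell\in\pa(i)\cup\pa(j)$ (not just $\ell=k$) via acyclicity, and you isolate and prove the ``key fact'' about in-neighbours of $i$ and $j$ in $\critdag{S}$ that the paper's sentence ``blocked by elements of $\pa(i)\cup\pa(j)$'' only gestures at.
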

\begin{proof}
The direction \enquote{$\implies$} is straightforward: If $i \rightarrow k \leftarrow j$ is the orientation of the triple $\{ i, k,j\}$ in the true DAG $(\cg,C)$, then this is a $\ast$-connecting path in $\cg^\ast_K$ of type (c) for any conditioning set $K$ with $k \in K$. In particular, $\CI{i,j|k \cup \pa(j) \cup \pa(i)} \notin \globalCsep{\cg}{C}$. \\
    We prove \enquote{$\impliedby$} by contraposition: if $\{i, k, j \}$ is an unshielded triple in $\cgtr$ which is not oriented $i \rightarrow k \leftarrow j$, then conditioning on $K:= k \cup \pa(i) \cup \pa(j)$ excludes the possibility of $\ast$-connecting paths in $\cg^{tr^\ast}_{C_K}$ between $i$ and $j$ of any type. The assumption directly implies that no path of type (c) exists. All paths of type (b), (d), (e) are blocked by elements of $\pa(i) \cup \pa(j)$. No $i-j$ path of type (a) exists, as this would imply either $i\rightarrow j \in \cgtr$ or $j \rightarrow i \in \cgtr$, contradicting the assumption that $\{i,j,k\}$ is an unshielded triple in $\cgtr$.
\end{proof}

\begin{remark} \label{rmk:collidercomplexity}
     The process of collecting non-minimal separating sets for each $i,j$ of the form above remains polynomial in the maximal in-degree of $\cg$. This is important for the total complexity of the modified PC algorithm which will be discussed at the end of this section. 
\end{remark}

 With the slight modifications described above, given $\globalCsep{\cg}{C}$, one can detect and correctly orient all of the unshielded colliders of $\cgtr$. After subsequently orienting edges according to the rules $\mathcal{R}2- \mathcal{R}4$ (recall that these are not dependent on the input statements), one obtains an approximation of $\cgtr$ which is \enquote{at least as good} as the output of the PC algorithm in the d-separation setting, in the sense that the skeleton and unshielded colliders of a graph capable of encoding the same conditional independence information as the true DAG is recovered. Due to the definition of $\Csep$ (in particular its reliance on \emph{directed} critical paths), additional edges may be oriented. 
\subsection{Orienting induced cycles}
In this section we show how the additional statements which arise from $\Csep$ may be used to orient additional edges in the CPDAG which are not orientable in either the $\dsep$ or $\starsep$ paradigm. We begin with the following example which shows that graphs which are Markov equivalent with respect to $\equivast / \equivd$ may have differing $\Csep$ global Markov properties. 

\begin{example} \label{eg:diamonds}
The diamond graph from \Cref{eg:diamond} and the 21-diamond \Cref{eg:21diamond} are Markov equivalent w.r.t. $\equivd / \equivstar$, as they have the same undirected skeleton and both contain the unshielded collider $2\rightarrow4 \leftarrow3$. However, we have seen in  \Cref{eg:diamond} and \Cref{eg:21diamondCsep} that there exist choices of edge weights such that their $\Csep$ global Markov properties differ. (In fact, the $\Csep$ global Markov properties of these two graphs differ for any choice of edge weights by \Cref{lem:critpathsmarkov}.) At the same time, their $\Csep$ Markov properties encode information about their respective critical paths. For example, in the diamond from \Cref{eg:diamond} the statement $[1\indep 4|3]$ holds because $1\rightarrow3\rightarrow4$ is the unique critical $1-4$ path, whereas the fact that $2\rightarrow1\rightarrow3\rightarrow4$ is the unique critical $2-4$ path in the 21-diamond from \Cref{eg:21diamondCsep} gives rise to the statement $[2\indep 4 | 13]$.
\end{example}
Differences such as those highlighted in \Cref{eg:diamonds}  allow \emph{induced cycles} of $\cgtr$ of a certain kind to be oriented. 
\begin{definition}
 Let $\cg = (V,E)$ be a graph. For $W \subset V$, the \textit{induced subgraph} of $\cg$ w.r.t $W$ is the subgraph $\cg[W]$ with node set $W$ containing the edge $ i \rightarrow j \in E $ if and only if $i, j \in W$. If $\cg[W]$ is a simple cycle, we call it an \textit{induced cycle} of $\cg$. 
\end{definition}
Observe that an induced cycle of a DAG containing exactly one collider has a unique \emph{source} (i.e. a node with induced in-degree zero), which determines the orientation of all other edges. Under certain additional assumptions, the sources of cycles of this type in $\cgtr$ may be located by querying $\globalCsep{\cg}{C}$. 
\begin{lemma} \label{lem:cycleori}
    For a weighted DAG $(\cg,C)$ with generic $C$, let $\mathcal{H} = \cgtr[W]$ be an induced cycle of $\cgtr$ with the following two properties: 
    \begin{enumerate}[label = (\roman*)] \label{misc:orientabilityconditions}
        \item $\ch$ contains a unique unshielded collider $k_1\rightarrow k \leftarrow k_2$,
        \item For any $w \in W \setminus k$, the unique critical $w-k$ path is contained entirely in $W$.
    \end{enumerate}
    For $i \in W \setminus \{ k_1, k, k_2 \}$ the following holds:  

\begin{align} \label{sourceset}
        i \  \text{is the source of} \  \mathcal{H} \iff \#\{ j \in W \ |& \exists \ K \subset V \setminus W  \ \text{s.t} \  \Csepgiven{i}{k}{jK} \}>0 \  \\ 
        &\text{is maximal in $W \setminus \{ k_1, k, k_2 \}$.} \nonumber 
    \end{align}
    Moreover, the maximum is the length of the critical path from the source to the sink $k$.
\end{lemma}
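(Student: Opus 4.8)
The plan is to fix $i\in W\setminus\{k_1,k,k_2\}$, pin down exactly which $j\in W$ can occur in a separating statement $\Csepgiven{i}{k}{\{j\}\cup K}$ for some $K\subseteq V\setminus W$, and then count. First I would record the combinatorics of $\ch$: an induced cycle has equally many colliders and sources, so (i) forces a unique source $s$, and $\ch$ is the union of two directed paths $\pi_1,\pi_2$ from $s$ to $k$---one through $k_1$, one through $k_2$---meeting only at $s$ and $k$; in particular every vertex of $W\setminus\{k\}$ is an ancestor of $k$ in $\cg$. By genericity there is a unique critical $s$-$k$ path, which by (ii) lies in $W$, hence is $\pi_1$ or $\pi_2$; relabel so that $\pi_1=\picrit{s}{k}$. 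Applying (ii) to an arbitrary $w\in W\setminus\{k\}$, the path $\picrit{w}{k}$ is the unique directed $w$-$k$ path inside the cycle---the suffix of $\pi_1$ or of $\pi_2$ starting at $w$---so all of its interior vertices lie in $W$. Finally, every edge of $\ch$ is an edge of $\cgtr$, hence is its own unique critical path, hence is present in $\critdag{K'}$ for every conditioning set $K'$.

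Next I would determine the admissible $j$. Since $\picrit{i}{k}\subseteq W$ and $K\cap W=\emptyset$, the edge $i\to k$---a type-(a) $\ast$-connecting path in $\critdag{\{j\}\cup K}$---is present unless $j\in\interior(\picrit{i}{k})$; so, writing $N(i)$ for the count in \eqref{sourceset}, we always have $N(i)\le|\interior(\picrit{i}{k})|$. When $i=s$ this bound is attained, with the choice $K=\an(k)\setminus W$: every parent of $s$ in $\critdag{\{j\}\cup K}$ is an external ancestor of $s$, hence of $k$, hence lies in the conditioning set, and therefore cannot serve as the non-collider $p$ of a type-(b), (d) or (e) path; type (a) is destroyed by $j\in\interior(\pi_1)$; and a type-(c) apex $\ell$ would require the edge $k\to\ell$, forcing $\ell\in\de(k)$, which neither $j$ (preceding $k$ on $\pi_1$) nor a member of $K$ (an ancestor of $k$) is. Hence $N(s)=|\interior(\picrit{s}{k})|$. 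If $\pi_1$ has no interior vertex then $s=k_1$ is excluded from $W\setminus\{k_1,k,k_2\}$ and both sides of the asserted equivalence fail for every admissible $i$, so assume $|\interior(\pi_1)|\ge 1$.

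For $i\neq s$ I would show $N(i)<N(s)$. If $i\in\interior(\pi_1)$, then $\picrit{i}{k}$ is a proper suffix of $\pi_1$, so $N(i)\le|\interior(\picrit{i}{k})|<|\interior(\pi_1)|=N(s)$. If $i\in\interior(\pi_2)$, I claim $N(i)=0$: any admissible $j$ lies in $\interior(\picrit{i}{k})\subseteq\interior(\pi_2)$, so $j\neq s$ and $j\notin\interior(\pi_1)$, whence the edge $s\to k$ survives in $\critdag{\{j\}\cup K}$ for every $K\subseteq V\setminus W$; combining this persistent edge with the fact that $s$ is an ancestor of $i$ along $\pi_2$, a short case analysis---on whether the shortcut edge $s\to i$ survives in $\critdag{\{j\}\cup K}$---produces a surviving $\ast$-connecting path between $i$ and $k$ through $s$: type (b) if $s\to i$ survives, and otherwise type (d) or (e) based on the $\pi_2$-predecessor of $i$, whose edge into $i$ is a $\cgtr$-edge and hence always present. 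So $i$ and $k$ are never $C^\ast$-separated and $N(i)=0<N(s)$. Together the three cases show that $N$ attains its maximum over $W\setminus\{k_1,k,k_2\}$ precisely at $i=s$, where it is positive and equal to $|\interior(\picrit{s}{k})|$---which is the asserted equivalence together with its \enquote{moreover}.

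The main obstacle is the third step---above all the case $i\in\interior(\pi_2)$---and, behind it, the bookkeeping needed to keep full control, inside $\critdag{\{j\}\cup K}$, over which of the five $\ast$-connecting shapes can occur. The delicate interaction is between \enquote{shortcut} edges produced by contracting critical paths and collider apexes that become active as $K$ is enlarged. The case $i=s$ is the tractable one because every edge into $s$ in a critical DAG comes from an external ancestor of $s$, which can be absorbed into the conditioning set; the remaining work amounts to tracking, via (ii) and acyclicity, that a critical path starting at a cycle vertex either stays on $\pi_1$ (where $j$ blocks it) or else must run through external ancestors of $k$, which one collects into $K$.
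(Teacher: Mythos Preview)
Your overall plan mirrors the paper's: bound $N(i)\le|\interior(\picrit{i}{k})|$ via the type-(a) edge, show the bound is attained at the source with $K=\an(k)\setminus W$, and then argue $N(i)=0$ for every $i\in\interior(\pi_2)$ because the trek $i\leftarrow s\to k$ persists. The paper executes that last step by first treating the pure-cycle case $\cgtr=\mathcal{H}$ (where the $\pi_2$-prefix is the \emph{only} $s$--$i$ path, so $j$, lying after $i$ on $\pi_2$, cannot block it) and then reducing the general case via the equivalence ``$\Csepgiven{i}{k}{j}$ in $\mathcal{H}$ $\Leftrightarrow$ $\Csepgiven{i}{k}{j\cup\an(W)}$ in $\cgtr$''; you instead argue directly in $\cgtr$ for arbitrary $K\subseteq V\setminus W$.

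The gap is exactly where you locate it. Hypothesis (ii) constrains only critical paths \emph{terminating at $k$}; it does not force $\picrit{s}{i}\subseteq W$ when $i\in\interior(\pi_2)$ is not adjacent to $s$ in $\mathcal{H}$. If $\picrit{s}{i}$ passes through some external vertex $v$, then putting $v$ into $K$ removes the edge $s\to i$ from $\critdag{\{j\}\cup K}$, and your fallback to a type-(d) or type-(e) path through the $\pi_2$-predecessor $i^{-}$ does not go through: for the collider $\ell=j\in\interior(\pi_2)$ one needs some $q\notin\{j\}\cup K$ with both $q\to j$ and $q\to k$ surviving, but any such $q$ with $q\to k$ present lies in $\pi_1\cup\{s\}$ and is then not an ancestor of $j$ (while $s\to j$ is blocked by the same $v$); for $\ell\in K$ the required edge $i^{-}\to\ell$ has no reason to exist. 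Concretely, take $W=\{s,a,c,i,b,k\}$ with critical side $\pi_1=(s,a,k)$ and $\pi_2=(s,c,i,b,k)$, and adjoin an external vertex $v$ with $\cgtr$-edges $s\to v\to i$ weighted so that $(s,v,i)$ is the critical $s$--$i$ path. Then (i) and (ii) hold, yet $\Csepgiven{i}{k}{\{b,v\}}$ is valid, so $N(i)=1=N(s)$ and the source is not the unique maximizer. What the argument actually needs is that critical paths between \emph{all} pairs of $W$-vertices remain in $W$; the stronger hypothesis of Proposition~\ref{rmk:orientablecond} guarantees this, but (ii) by itself does not.
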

\begin{proof}
We start by considering the case in which $\cgtr$ itself is a cycle. In this setting, the statements appearing in the set on the right hand side of \eqref{sourceset} have conditioning sets of cardinality one. 
If $i \in V \setminus \{ k, k_1, k_2 \}$ is the source then there exist two directed $i -k$ paths
\begin{align*}
    \pi_1 \ : \ i = j_{1,0} \rightarrow j_{1,1} \rightarrow j_{1,2} \rightarrow \cdots \rightarrow j_{1,m_1} &= k_1 \rightarrow k \\
       \pi_2 \ : \ i = j_{2,0} \rightarrow j_{2,1} \rightarrow j_{2,2} \rightarrow \cdots \rightarrow j_{2,m_2} &= k_2 \rightarrow k. 
\end{align*}
Assume that $\pi_1$ is the (by genericity of $C$ unique) critical directed $i-k$ path.
 
Then there exist $m_1$ statements which separate $i$ and $k$ of the form $\Csepgiven{i}{k}{j_{1,\ell}}$ for $\ell \in \{ 1, \dots m_1\} $. These are the only separation statements of cardinality one separating $i$ and $k$, as for any intermediate $j'$ along $\pi_2$, the critical DAG $\critdag{j'}$ contains the edge $i\rightarrow k$ arising from the unblocked critical path $\pi_1$. This edge is a $\ast-$connecting path of type (a). Any other $j_{1,\ell}$ along $\pi_1$ will be separated from $k$ only given any one of the  intermediate nodes $j_{1, \ell+1} \dots j_{1, m_1}$. This is because for any $q \in \{\ell+1, \dots m_1-1\}$, and $p \in \{0, \dots, \ell \}$

the critical DAG $\critdag{j_{1,q}}$  does not contain the edge $ j_{1,p} \rightarrow k$, meaning that no $\ast-$connecting path between $j_{1, \ell}$ and $k$  of type (a) or (b) is present. The possibility of $\ast-$connecting paths of types (c) - (e) is excluded by the fact that the underlying DAG contains only one collider at $k$. Moreover, none of the nodes along the non-critical path of the form $j_{2,\ell}$ , for $\ell \in \{1, \dots   , m_2-1\}$ are separated from $k$ upon conditioning on one node, as no single node simultaneously blocks $\pi_1$ and the directed $i-j_{2,\ell}$ portion of the cycle. More specifically, for any $j \in V \setminus\{ k_1, k, k_2, j_{2,\ell}\}$, the critical DAG $\critdag{j}$ contains the edge $j_{2,\ell} \rightarrow k$ in the case that $j \not \in \{j_{2, \ell+1} , \dots j_{2,m_2} \}$ and the 3-node trek $j_{2,\ell}\leftarrow i \rightarrow k$ otherwise. In either case, $j_{2,\ell}$ and $k$ are $\ast-$connected given $j$. This completes the proof in the case that $\cgtr$ itself is a cycle. 

To generalize the proof for induced cycles it suffices to observe that given a weighted transitive reduction $\cgtr$ and an induced cycle $\mathcal{H} = \cgtr[W]$ the following holds: 
\begin{align} \label{eqn:equivinducedcycle}
    \Csepgiven{i}{k}{j} \ \text{holds in} \ \mathcal{H} \iff \Csepgiven{i}{k}{j \cup \an(W)}  \ \text{holds in}  \ \cgtr. 
\end{align}
This is because by adding $\an(W)$ to the conditioning set, all treks from $W \setminus k$ to $k$ containing intermediate nodes outside of $W$ are blocked. Furthermore, the second condition of \Cref{misc:orientabilityconditions} implies that any directed path between nodes in $W$ containing intermediate nodes outside of $W$ does not contribute to the critical path structure of $\mathcal{H}$. Taken together, these last two facts imply that the critical DAG $\mathcal{H}^\ast_{j}$ is precisely the restriction of $\cg^{tr\ast}_{C_{j \cup \an(W)}}$ to $W$, which in turn implies \eqref{eqn:equivinducedcycle}. This completes the proof. 
\end{proof}
We call induced cycles which fulfill the two of conditions \eqref{misc:orientabilityconditions} of \Cref{lem:cycleori} \textit{orientable}. The second condition is generally not verifiable given only the global Markov property. However, a sufficient condition for orientability can be verified given the information obtained in the skeleton retrieval and collider orientation steps. 

\begin{figure}
\begin{minipage}{0.35\textwidth}
    \centering
    \scalebox{0.8}{
\begin{tikzpicture}[
    ->,
    >=stealth',
    shorten >= 1pt,
    node distance = 2.8cm,
    thick,
    every node/.style={circle, draw, line width=1pt},
    every path/.style={thick, -{Stealth[length=5pt, width=5pt, scale=1]}}
]
        \node[minimum size = 8mm] (A) at (0,2) {$i$};
        \node (B) at (1.99,0.99) {\phantom{}};
        \node (B2) at (1.15, 1.49) {\phantom{}};
        \node[minimum size = 8mm] (J) at (1.99, -0.3) {$j$};
        \node (C) at (1.99,-1.49) {\phantom{}};
        \node[minimum size = 8mm] (D) at (0,-2.99) {$k$};
        \node (E) at (-1.99,-1.49) {\phantom{}};
        \node (F) at (-1.99,0.99) {\phantom{}};

        \draw (A) -- (B2);
        \draw (B2) -- (B);
        \draw (A) -- (F);
        \draw (B) -- (J);
        \draw (J) -- (C);
        
        \draw (F) -- (E);
        \draw (E) -- (D);
        \draw (C) -- (D);
    
    \end{tikzpicture}
}    
    \vspace{-0.7cm}
    $\cg$
\end{minipage}
\hspace{1cm}
\begin{minipage}{0.35\textwidth}
    \centering
    \scalebox{0.8}{
\begin{tikzpicture}[
    ->,
    >=stealth',
    shorten >= 1pt,
    node distance = 2.8cm,
    thick,
    every node/.style={circle, draw, line width=1pt},
    every path/.style={thick, -{Stealth[length=5pt, width=5pt, scale=1]}}
]
        \node[minimum size = 8mm] (A) at (0,2) {$i$};
        \node (B) at (1.99,0.99) {\phantom{}};
        \node (B2) at (1.15, 1.49) {\phantom{}};
        \node[minimum size = 8mm, fill = orange!50] (J) at (1.99, -0.3) {$j$};
        \node (C) at (1.99,-1.49) {\phantom{}};
        \node[minimum size = 8mm] (D) at (0,-2.99) {$k$};
        \node (E) at (-1.99,-1.49) {\phantom{}};
        \node (F) at (-1.99,0.99) {\phantom{}};

        \draw (A) -- (B2);
        \draw (B2) -- (B);
        \draw (A) -- (F);
        \draw (B) -- (J);
        \draw (J) -- (C);
        
        \draw (F) -- (E);
        \draw (E) -- (D);
        \draw (C) -- (D);
        \draw [->] (A) to [bend right = 40 ](B);
        \draw [->] (A) to (E); 
        \draw [->] (F) to [bend left = 40] (D);
        \draw [->] (J) to  (D);
    
    \end{tikzpicture}
}
    \vspace{-0.7cm}
    $\critdag{j}$

\end{minipage}

\caption{Left: an orientable cycle  $\cg$. Right: the critical DAG $\critdag{j}$, if the weights are chosen such that the directed path through $j$ is the unique $i-k$ critical path. }
\end{figure}
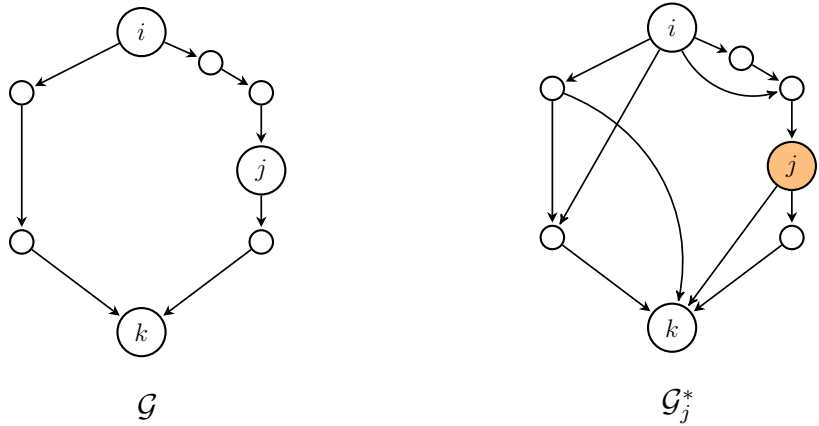

\begin{proposition}
    
\label{rmk:orientablecond}
    
Let $\cgtr$ be the weighted transitive reduction of some true DAG $(\cg,C)$ with $C$ generic, and $\mathcal{H} = \cgtr[W]$ be an induced cycle of $\cgtr$ containing exactly one unshielded collider. $\mathcal{H}$ is orientable if it fulfills the following condition:
\begin{center}
    For all $w \in W \setminus k$ any undirected $w-k$ path containing nodes \\  outside of $W$ contains an unshielded collider in $\cgtr$.
\end{center}
\end{proposition}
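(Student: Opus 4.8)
The plan is to show that the stated condition implies both hypotheses (i) and (ii) of \Cref{lem:cycleori}; since by definition $\mathcal{H}$ being \emph{orientable} means precisely that (i) and (ii) hold, this suffices. Hypothesis (i)---that $\mathcal{H}$ has a unique unshielded collider $k_1 \rightarrow k \leftarrow k_2$---is part of the assumption, so the whole content of the proof is to deduce (ii): for every $w \in W \setminus \{k\}$, the unique critical directed $w-k$ path of $\cgtr$ lies entirely inside $W$.

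First I would note that this path exists and is unique. Since $\mathcal{H} = \cgtr[W]$ is an induced cycle whose only collider is $k$, that node is the unique sink of $\mathcal{H}$, so $\mathcal{H}$ is the union of two directed paths from a common source to $k$; in particular every $w \in W \setminus \{k\}$ lies on a directed path to $k$ inside $\mathcal{H}$, hence inside $\cgtr$. Genericity of $C$ (\Cref{def:genericC}) then makes the critical such path, call it $\pi$, unique.

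The key step---and really the only one---is the observation that $\pi$, being a \emph{directed} path $w \rightarrow \dots \rightarrow k$, has all of its edges consistently oriented and therefore contains no collider at all, in particular no unshielded collider of $\cgtr$. If $\pi$ passed through a node outside $W$ then, as $\pi$ is in particular an undirected $w-k$ path using a node outside $W$, the hypothesis would force $\pi$ to contain an unshielded collider of $\cgtr$, a contradiction. Hence $\pi \subseteq W$, which is exactly condition (ii), and \Cref{lem:cycleori} then yields that $\mathcal{H}$ is orientable. I do not expect a genuine obstacle here: the hypothesis is deliberately a coarse, skeleton-level statement---checkable from $\skel(\cgtr)$ together with its unshielded colliders, i.e.\ from the data produced by the skeleton-retrieval and collider-detection steps---and it is strong enough that a purely directed path, having no collider, cannot meet its conclusion, so any such $w-k$ path must stay within $W$; the only place needing a little care is the bookkeeping above that a directed $w-k$ path inside $W$ exists at all.
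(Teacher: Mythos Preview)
Your argument is correct and follows essentially the same line as the paper's own proof: both reduce the proposition to verifying condition (ii) of \Cref{lem:cycleori} via the observation that a directed path contains no collider, so the hypothesis forces any directed $w$--$k$ path in $\cgtr$ to stay within $W$. You are slightly more careful than the paper in explicitly checking that a directed $w$--$k$ path exists inside $\mathcal{H}$ (so that the critical one is well-defined), which is a welcome addition but not a different approach.
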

\begin{proof}
One has to show that the condition in \Cref{rmk:orientablecond} implies the second statement of \eqref{misc:orientabilityconditions}. This follows immediately from the observation that for all $w \in W \setminus k$ if any $w-k$ path containing nodes outside of $W$ contains a collider triple, then no trek or directed path from $w$ to $k$ exists apart from those which factor entirely through $W$.
\end{proof}

 While \Cref{rmk:orientablecond} is a stronger condition on $\cgtr$ than the second statement of \eqref{misc:orientabilityconditions}, it still allows for the orientability of cycles and certain kinds of \enquote{fat path} graphs \cite{fink_sullivant_2016}, as well as level-1 phylogenetic networks \cite{steel2016phylogeny}.

\begin{remark} \label{rmk:srcneighbor}
Using an argument similar to that of \Cref{lem:cycleori}, the set on the right hand side of \eqref{sourceset} is empty for all $i \in W \setminus \{k_1, k_2, k \}$ if and only if the source of the induced cycle is either $k_1$ or $k_2$. However, it is impossible to distinguish between these two cases by querying $\globalCsep{\cg}{C}$; if no $i \in W \setminus \{k_1, k_2, k \}$ is separated from $k$ upon conditioning on only one other node $j$ in the cycle, then there exist two $\ast-$connecting paths from $i$ to $k$ in $\mathcal{H}^\ast_{j}$, which due to our assumptions must correspond to paths of type (a) and (b) through $k_1$ and $k_2$. Thus, if we encounter such a cycle, we can do no better than to \enquote{mark} $k_1$ and $k_2$ as possible sources.

\end{remark}

The arguments above rely on the fact that in a cycle containing a unique collider $k$, each node $i \in W \setminus \{k \}$ is connected to $k$ either by two directed paths (if $i$ is the source) or one directed path and one trek through the source (otherwise). Thus, they do not generalize to the case of induced cycles containing more than one collider.
\begin{example} \label{eg:hexagons}
Assume that, given some true weighted DAG $(\cg,C)$, the skeleton of its weighted transitive reduction (i.e. the output of \Cref{alg:PCalgskel} applied to $\globalCsep{\cg}{C}$) is a hexagon. Due to acyclicity, $\cgtr$ must contain at least one collider. These colliders may be found with the condition from \Cref{prop:findcoll}. Consider first the case that there is a unique collider, w.l.o.g $6 \rightarrow 5 \leftarrow 4$. 
\begin{figure}
\begin{minipage}{0.35\textwidth}

    \centering
        \begin{tikzpicture}[
    scale = 1,
    every path/.style={thick, -{Stealth[length=8pt, width=8pt]}}
]
        \path (0,1.49) coordinate (A); 
        \path (0.99,0.99) coordinate (B);
        \path (0.99,-0.49) coordinate (C);
        \path (0,-0.99) coordinate (D); 
        \path (-0.99,-0.49) coordinate (E);
        \path (-0.99,0.99) coordinate (F);
    
        \draw (A) -- (B);
        \draw (A) -- (F);
        \draw (B) -- (C);
        \draw (F) -- (E);
        \draw (E) -- (D);
        \draw (C) -- (D);
        \path (3,1.49) coordinate (A2); 
        \path (3.99,0.99) coordinate (B2);
        \path (3.99,-0.49) coordinate (C2);
        \path (3,-0.99) coordinate (D2); 
        \path (1.99,-0.49) coordinate (E2);
        \path (1.99,0.99) coordinate (F2);
    
        \draw (A2) -- (B2);
        \draw (F2) -- (A2);
        \draw (B2) -- (C2);
        \draw (F2) -- (E2);
        \draw (E2) -- (D2);
        \draw (C2) -- (D2);
        \node at (1.5, 0.4) {\Large$\not\approx$};
    
    \end{tikzpicture}    
    Detectable with \Cref{lem:cycleori}
\end{minipage}
\hfill
\begin{minipage}{0.55\textwidth}
    \centering
           \begin{tikzpicture}[
    scale = 1,
    every path/.style={thick, -{Stealth[length=8pt, width=8pt]}}
]
        \path (0-5,1.49) coordinate (A); 
        \path (0.99-5,0.99) coordinate (B);
        \path (0.99-5,-0.49) coordinate (C);
        \path (0-5,-0.99) coordinate (D); 
        \path (-0.99-5,-0.49) coordinate (E);
        \path (-0.99-5,0.99) coordinate (F);
    
        \draw (B) -- (A);
        \draw (A) -- (F);
        \draw (C) -- (B);
        \draw (F) -- (E);
        \draw (E) -- (D);
        \draw (C) -- (D);
        \path (3-5,1.49) coordinate (A2); 
        \path (3.99-5,0.99) coordinate (B2);
        \path (3.99-5,-0.49) coordinate (C2);
        \path (3-5,-0.99) coordinate (D2); 
        \path (1.99-5,-0.49) coordinate (E2);
        \path (1.99-5,0.99) coordinate (F2);
    
        \draw (A2) -- (B2);
        \draw (F2) -- (A2);
        \draw (B2) -- (C2);
        \draw (E2) -- (F2);
        \draw (E2) -- (D2);
        \draw (C2) -- (D2);
        \node at (1.5-5, 0.4) {\Large$ \approx$};
  
        \path (1.49 -1,-1.49+3) coordinate (A3); 
        \path (2.49-1,-1.99+3) coordinate (B3);
        \path (2.49-1,-3.49+3) coordinate (C3);
        \path (1.49-1,-3.99+3) coordinate (D3); 
        \path (0.49-1,-3.49+3) coordinate (E3);
        \path (0.49-1,-1.99+3) coordinate (F3);
    
        \draw (B3) -- (A3);
        \draw (F3) -- (A3);
        \draw (B3) -- (C3);
        \draw (E3) -- (F3);
        \draw (E3) -- (D3);
        \draw (D3) -- (C3);
        \draw (E3) -- (C3);
    \end{tikzpicture}
    Not detectable with \Cref{lem:cycleori}
\end{minipage}
\caption{The two cycles on the left have differing global Markov properties. Of the cycles on the right, the first two have equal global Markov properties, meaning that they are indistinguishable at the level of CI data. The cycles of the rightmost graph are not orientable as they are either not induced, or do not contain exactly one unshielded collider.}
\end{figure}

If the source of the cycle is at $6$ or $4$, then no node in $\cgtr$ can be separated from the sink $5$ by conditioning on one node. In accordance with \Cref{rmk:srcneighbor}, we cannot distinguish from these two cases by querying $\globalCsep{\cg}{C}$. If the source of the cycle is any other node, then we may find it by the condition in \Cref{lem:cycleori}.

If the hexagon contains two colliders, for example $6 \rightarrow 5 \leftarrow 4$ and $1 \rightarrow 2 \leftarrow 3$, then the remaining undirected edges $1-6$ and $3-4$ cannot be oriented. In fact, in this example, each of the four possibilities of orienting these two edges give rise to the same $\Csep$ global Markov property.

\end{example}

We conclude this section by introducing the \texttt{PCstar} algorithm, which utilizes the previous results to reconstruct a partially oriented weighted transitive reduction of a true unknown graph given its $C^\ast$-global Markov property. \bigbreak 
\begin{algorithm}[H] 
\caption{\texttt{PCstar}}
\label{alg:PCstar}
\Input{A complete set of CI statements $\globalCsep{\cg}{C}$ coming from a graphical model faithful to $C^\ast-$separation}  \Output{A CPDAG approximating the sparsest graph with the same Markov property as $(\cg,C)$ for an appropriate choice of weights}

Recover $G^{tr} = \skel(\cgtr)$ by applying \Cref{alg:PCalgskel} to $\globalCsep{\cg}{C}$. \\
Detect and orient the unshielded colliders of $G^{tr}$ using \Cref{prop:findcoll}. \\
Optional: \For{each orientable induced cycle $ \cgtr[W] = \mathcal{H}$}{
     \phantom{...................} Orient $\mathcal{H}$ using \Cref{lem:cycleori} 
}
Orient the remaining edges of $G^{tr}$ according to $\mathcal{R}2-\mathcal{R}4$ of \Cref{misc:orientationrules}

\end{algorithm}
\begin{theorem} \label{thm:PCstarcorrect}
The output of \Cref{alg:PCstar} is a CPDAG with the same undirected skeleton, unshielded colliders, and orientable induced cycles as $\cgtr$. Without the optional cycle orientation step, its complexity is $\mathcal{O}(n^{d+2})$, where $n = |V|$ and $d = \max_{v \in V} \indeg(v)$. 
    
\end{theorem}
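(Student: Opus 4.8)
The plan is to split the statement into its four ingredients — correctness of the recovered skeleton, correctness of the unshielded colliders, correctness of the orientations placed on orientable induced cycles, and the fact that after rule propagation one is left with a genuine CPDAG obtained within the stated time bound — and to dispatch each by invoking the results already established in this section.

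Step~1 of \Cref{alg:PCstar} is literally \Cref{alg:PCalgskel} applied to $\globalCsep{\cg}{C}$, so \Cref{thmPCwtr} immediately gives that its output is $G^{tr}=\skel(\cgtr)$; the same theorem records that it suffices to test CI statements whose conditioning set has cardinality at most $d=\max_{v\in V}\indeg(v)$, which is precisely why the classical $\mathcal{O}(n^{d+2})$ bound for the PC skeleton loop (cf.\ \cite[p.~138]{meek_graphical_2023}) applies verbatim. Since the skeleton is correct, the unshielded triples of the output coincide with those of $\cgtr$, and Step~2 declares $\{i,k,j\}$ to be a collider exactly according to the membership test of \Cref{prop:findcoll}; hence the unshielded colliders of the output are exactly those of $\cgtr$. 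For the cost of this step I would appeal to \Cref{rmk:collidercomplexity}: there are $\mathcal{O}(n^3)$ unshielded triples, and for each the non-minimal separating set of the required form $k\cup\pa(i)\cup\pa(j)$ has cardinality bounded in terms of $d$, so collecting it and performing the query stays within $\mathcal{O}(n^{d+2})$. If the optional Step~3 is executed, \Cref{lem:cycleori} guarantees that for each orientable induced cycle $\mathcal{H}=\cgtr[W]$ the maximization in \eqref{sourceset} identifies its source, which in turn fixes the orientation of every edge of $\mathcal{H}$; so the orientable induced cycles of the output agree with those of $\cgtr$.

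It then remains to check that Step~4 turns the partially oriented graph built so far into a well-defined CPDAG. The rules $\mathcal{R}2$--$\mathcal{R}4$ are purely combinatorial — independent of the input CI statements, as remarked several times above — and I would lean on the soundness and completeness of Meek's orientation rules \cite{meek_graphical_2023, andersson_characterization_1997}: every edge they orient must be oriented that way in any DAG with the given skeleton and unshielded colliders, and they introduce neither directed cycles nor new unshielded colliders. Crucially, all orientations placed in Steps~2 and~3 are orientations actually present in the single DAG $\cgtr$, hence pairwise consistent and consistent with a common acyclic extension; running $\mathcal{R}2$--$\mathcal{R}4$ on them therefore produces exactly the edges compelled within the class of DAGs sharing the skeleton, unshielded colliders, and orientable induced cycles of $\cgtr$, i.e.\ a CPDAG with the asserted properties. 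For the running time, Step~4 costs $\mathcal{O}(n^3)$ \cite[p.~143]{meek_graphical_2023}; combined with the $\mathcal{O}(n^{d+2})$ bounds for Steps~1 and~2 — and since $n^3=\mathcal{O}(n^{d+2})$ whenever $d\ge 1$, the case $d=0$ being trivial — the algorithm without Step~3 runs in $\mathcal{O}(n^{d+2})$.

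I expect the main obstacle to be the two consistency points hidden around Step~2 and the hand-off to Step~4: first, that the separating sets demanded by \Cref{prop:findcoll} condition on parent sets the algorithm has not yet oriented, so one must verify (via \Cref{rmk:collidercomplexity}) that they can nevertheless be assembled within budget; and second, that interleaving the collider orientations of Step~2 with the cycle orientations of Step~3 before applying $\mathcal{R}2$--$\mathcal{R}4$ never creates a partially directed cycle or an orientation conflict. Both points ultimately reduce to the single observation that every orientation produced prior to Step~4 is read off from the fixed acyclic graph $\cgtr$, which I would make the backbone of the write-up.
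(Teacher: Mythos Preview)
Your proposal is correct and follows essentially the same approach as the paper: correctness is deduced from \Cref{thmPCwtr}, \Cref{prop:findcoll}, and \Cref{lem:cycleori}, while the complexity bound is obtained by noting that Lines~1, 2, and 5 each stay within $\mathcal{O}(n^{d+2})$ via the classical PC bound, \Cref{rmk:collidercomplexity}, and Meek's $\mathcal{O}(n^3)$ bound respectively. Your write-up is in fact more careful than the paper's brief proof, in particular in making explicit that all pre-Step~4 orientations are read off the single DAG $\cgtr$ and hence are mutually consistent before rule propagation --- a point the paper leaves implicit.
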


\begin{proof} 
The correctness of the output is a direct consequence of \Cref{thm:PCstarcorrect}, \Cref{prop:findcoll} and \Cref{lem:cycleori}. 
For the complexity, recall that the worst-case runtime of the traditional PC algorithm is that of \Cref{alg:PCalgskel}, which lies in $\mathcal{O}(n^{d+2})$, see \cite[p. 110]{spirtes_causation_1993}. To prove that \texttt{PCstar} without cycle orientation has the same complexity, it suffices to show that none of the computations in lines 1,2 and 5 of \Cref{alg:PCstar} have poorer worst-case performance. Line 1 is exactly \Cref{alg:PCalgskel}. The complexity of Line 2 is the same as that of the collider orientation step of the original PC algorithm by \Cref{rmk:collidercomplexity}, and Line 5 is the same edge orientation step as that of PC. Both of these procedures are polynomial in $|E|$ and in particular bounded above by $n^{d+2}$ \cite[p. 143]{meek_graphical_2023}.
\end{proof}

The worst-case performance of detecting all orientable induced cycles is $\mathcal{O}(d^n)$; this bound comes from finding all cycles which contain a given unshielded triple. This computation becomes intractable even for relatively small $n$. However, the actual orientation of an orientable induced cycle with $|W|$ nodes involves querying the set $\globalCsep{\cg}{C}$ a total of $|W|^2$ times. This suggests an alternative approach which replaces Lines 3-4 of \Cref{alg:PCstar} with the orientation of only certain cycles which are selected manually from the output of lines 1-2. For example, to determine the orientation of a single undirected edge of the CPDAG, one could determine only the induced cycles which contain it and check for orientability by verifying the conditions in \Cref{misc:orientabilityconditions}, but we leave this to future work.

\section{Implementation and Outlook}\label{sec:simulations}
We implemented \texttt{PCstar} in julia \cite{bezanson2017julia}, using the packages Graphs.jl \cite{Graphs2021} and OSCAR.jl \cite{OSCAR} for graph and tropical arithmetic functionality respectively. Methods for applying the edge orientation rules were imported from the CausalInference.jl \cite{CausalInference} package. The source code may be found at \url{https://github.com/fpnowell/starskeleton}.

We ran our function \texttt{PCstar} on data produced by randomly generated weighted DAGs on 10, 15 and 20 nodes of varying degrees of sparsity. For the 15 node graphs onwards, we did not generate the entire set of CI statements in the corresponding model, opting instead to query the model as needed by checking for $C^\ast$- separation in the true DAG. 
The runtime of our implementation increases rapidly with increasing $n$ and maximal in-degree $d$. Despite this, our algorithm terminated in under 24 hours on 31-node graphs of in-degrees 2 and 3; quantities which are motivated by the real-world data set of the Danube hidden river network commonly used in extremal statistics \cite{asadi2015extremes}. 

The main obstacle to the real-world application of \texttt{PCstar} is the lack of a canonical conditional independence test. While linear SEMs are inherently parametric, meaning that a CI test can be obtained by imposing rank conditions on an estimated covariance matrix, no similar procedure exists for MLBNs. Thus, future work may involve the development of specialized non-parametric conditional independence testing \cite{shah_CItesting} for MLBNs. Other potential avenues include a combination of our methods with recent work on edge weight estimation of MLBNs \cite{adams_ferry2025} and the development of alternative causal discovery tools for MLBNs based on  methods such as greedy equivalence search \cite{chickering2003optimal} and \enquote{do}-calculus \cite{Pearl_2009}.
\begin{table}[H]
    \caption{Average values for the output of \texttt{PCstar} on randomly generated weighted DAGs of varying size and sparsity level. The quantity $d$ is the maximal in-degree of the true DAG. }
    \label{tab:PCstarresults}
    \centering
    \begin{tabular}{cccccc}
        \toprule
        $|V|$ &  $ \ \ d \ \ $  & \# edges  & \# edges & \# recovered edges  & \# recovered edges \\
        \phantom{} & \phantom{} & of $\cg$  & of $\cgtr$  &(w/o cycle orientation) & (with cycle orientation) \\
        \midrule
        10 & 2 & 9.17 & 8.66 & 3.88 & 4.43 \\

        \phantom{} & 3 & 13.13 & 10.54 & 6.38 & 7.54 \\
        \phantom{} & 4 & 16.13 & 11.57 & 6.99 & 8.33 \\
        \phantom{} & 5 & 19.63 & 12.15 & 6.94 & 8.60 \\
        \midrule

        15 & 2 & 15.63 & 14.89 & 8.53 & 9.58 \\
        \phantom{} & 3 & 17.46 & 15.95 & 10.02 & 11.31 \\
        \phantom{} & 4 & 22.74 & 18.21 & 12.83 & 14.59 \\
        \phantom{} & 5 & 28.30 & 19.61 & 14.17 & 15.92\\
        \midrule
        20 & 2 &  20.36 & 19.74  & 11.51 & 12.47 \\
        \phantom{} & 3 & 22.57 & 21.31 & 13.74 & 15.32 \\
        \phantom{} & 4 & 28.13 & 24.38 & 17.85 & 20.14 \\

        \bottomrule
    \end{tabular}
\end{table}

\section{Acknowledgments}
Benjamin Hollering was partially supported by the Alexander von Humboldt foundation. 
Francesco Nowell is supported by the Deutsche Forschungsgemeinschaft (DFG, German Research Foundation)
under the Priority Programme “Combinatorial Synergies” (SPP 2458, project ID: 539875257).

\printbibliography
\end{document}